\documentclass[letterpaper, 10 pt, conference]{ieeeconf}  

\IEEEoverridecommandlockouts                              

\usepackage[T1]{fontenc}
\usepackage{lmodern}
\usepackage{anyfontsize}
\usepackage{newtxtext}
\usepackage{microtype}
\usepackage{soul}
\usepackage{siunitx}

\usepackage{graphicx}
\usepackage[dvipsnames]{xcolor}
\usepackage{nicematrix}
\usepackage{booktabs}
\usepackage{tabularx}
\usepackage{multirow}
\let\olditemize\itemize
\let\endolditemize\enditemize
\renewenvironment{itemize}{
  \olditemize
  \setlength{\leftmargin}{-2pt} 
  \setlength{\itemindent}{0pt}
}{
  \endolditemize
}

\let\oldenumerate\enumerate
\let\endoldenumerate\endenumerate
\renewenvironment{enumerate}{
  \oldenumerate
  \setlength{\leftmargin}{-1em} 
  \setlength{\itemindent}{0pt}
}{
  \endoldenumerate
}

\newcommand{\gauss}{{\scshape Gauss}}
\newcommand{\scshaper}{{\scshape r}}
\newcommand{\gausscvar}{{\scshape Gauss-Cva\scshaper}}
\newcommand{\vnb}{{\scshape Vnb}}
\newcommand{\cem}{{\scshape Cem}}
\newcommand{\pf}{{\scshape Pf}}

\usepackage{amsmath, amssymb, amsfonts,mathtools}

\usepackage{amsthm}
\usepackage{capt-of}
\usepackage[export]{adjustbox}

\newtheoremstyle{ieeeblock}
{6pt}      
{6pt}      
{\normalfont} 
{}         
{\bfseries} 
{.}        
{0.5em}    
{}        

\theoremstyle{ieeeblock}

\newtheorem{definition}{Definition}
\newtheorem{theorem}{Theorem}
\newtheorem{problem}{Problem}
\newtheorem{remark}{Remark}

\usepackage[ruled, vlined]{algorithm2e}
\usepackage[bookmarks=true,hyperfootnotes=false,colorlinks=true,allcolors=blue,urlcolor=blue]{hyperref}
\usepackage{bookmark}
\usepackage{cleveref}

\usepackage{titlesec}
\renewcommand{\thesection}{\Roman{section}}
\renewcommand{\thesubsection}{\Alph{subsection}}
\renewcommand{\thesubsubsection}{\arabic{subsubsection})}
\titleformat{\section}[block]
  {\normalfont\normalsize\scshape\centering}
  {\thesection.}{0.55em}{}
\titleformat{\subsection}[block]
  {\normalfont\itshape}
  {\thesubsection.}{0.5em}{}
\titleformat{\subsubsection}[runin]
  {\normalfont\itshape}
  {\thesubsubsection}{0.35em}{}[:]
\titlespacing*{\section}{0pt}{8pt plus 1pt minus 1pt}{3pt plus 1pt minus 1pt}
\titlespacing*{\subsection}{0pt}{5pt plus 1pt minus 1pt}{1pt plus 1pt minus 1pt}
\titlespacing*{\subsubsection}{0pt}{3pt plus 1pt minus 1pt}{0.5em}
\titlespacing*{\paragraph}{0pt}{2pt plus 1pt minus 1pt}{0.5em}
\makeatletter
\renewcommand{\p@subsection}{\thesection.}
\renewcommand{\p@subsubsection}{\thesection.\Alph{subsection}.}
\makeatother

\crefformat{appendix}{Appendix~#2#1#3}
\Crefformat{appendix}{Appendix~#2#1#3}

\usepackage{nicefrac}

\usepackage{array}
\newcolumntype{L}[1]{>{\raggedright\arraybackslash}p{#1}}
\newcolumntype{C}[1]{>{\centering\arraybackslash}p{#1}}
\newcolumntype{R}[1]{>{\raggedleft\arraybackslash}p{#1}}
\definecolor{OceanGreen}{HTML}{448F65} 
\definecolor{nodeBlue}{HTML}{D0E4F5}
\definecolor{nodeGreen}{HTML}{C8E6C9}
\definecolor{nodeAmber}{HTML}{FFE0B2}
\definecolor{nodeGray}{HTML}{ECEFF1}
\definecolor{txtDark}{HTML}{263238}
\definecolor{gradTeal}{HTML}{0C443E}
\definecolor{borderDark}{HTML}{000000}
\definecolor{simBg}{HTML}{F0F4F8}

\usepackage{tikz}
\usetikzlibrary{spy, shapes, arrows.meta, positioning, calc, fit}
\usepackage[mode=buildnew]{standalone}
\definecolor{nodeGrayHW}{HTML}{40413B}    
\definecolor{robotorange}{HTML}{E44C0D} 
\title{\textbf{Variational Neural Belief Parameterizations for \\ Robust Dexterous Grasping under Multimodal Uncertainty}}

\author{
  Clinton Enwerem\textsuperscript{1},\, Shreya Kalyanaraman\textsuperscript{2},\, John S. Baras\textsuperscript{1},\, Calin Belta\textsuperscript{1}\thanks{\textsuperscript{1}Department of Electrical \& Computer Engineering and Institute for Systems Research, University of Maryland, College Park, MD, USA. \textsuperscript{2}Maryland Applied Graduate Engineering, A.~James Clark School of Engineering, University of Maryland, College Park, MD, USA. Emails: \texttt{\{{enwerem, shreya05, baras, calin}\}@umd.edu}.}\thanks{\copyright~2026 IEEE. Personal use of this material is permitted.
Permission from IEEE must be obtained for all other uses, in any current
or future media, including reprinting/republishing this material for
advertising or promotional purposes, creating new collective works, for
resale or redistribution to servers or lists, or reuse of any copyrighted
component of this work in other works. Accepted for publication at the
2026 IEEE/RSJ International Conference on Intelligent Robots and Systems
(IROS).}
}

\begin{document}
\maketitle

\begin{abstract}
Contact variability, sensing uncertainty, and external disturbances make grasp execution stochastic. Expected-quality objectives ignore tail outcomes and often select grasps that fail under adverse contact realizations. Risk-sensitive POMDPs address this failure mode, but many use particle-filter beliefs that scale poorly, obstruct gradient-based optimization, and estimate Conditional Value-at-Risk (CVaR) with high-variance approximations. We instead formulate grasp acquisition as variational inference over latent contact parameters and object pose, representing the belief with a differentiable Gaussian mixture. We use Gumbel-Softmax component selection and location-scale reparameterization to express samples as smooth functions of the belief parameters, enabling pathwise gradients through a differentiable CVaR surrogate for direct optimization of tail robustness. In simulation, our variational neural belief improves robust grasp success under contact-parameter uncertainty and exogenous force perturbations while reducing planning time by roughly an order of magnitude relative to particle-filter model-predictive control. On a serial-chain robot arm with a multifingered hand, we validate grasp-and-lift success under object-pose uncertainty against a Gaussian baseline. Both methods succeed on the tested perturbations, but our controller terminates in fewer steps and less wall-clock time while achieving a higher tactile grasp-quality proxy. Our learned belief also calibrates risk more accurately, keeping mean absolute calibration error below 0.14 across tested simulation regimes, compared with 0.58 for a Cross-Entropy Method planner. We provide code, simulation assets, and a dataset of 243 force-closed grasps at the following link: \url{www.github.com/coenwerem/vnb-grasp}.
\end{abstract}

\section{Introduction}\label{sec:intro}

\begin{figure}[htb]
\centering
\vspace{-10pt}
\includegraphics[width=\linewidth]{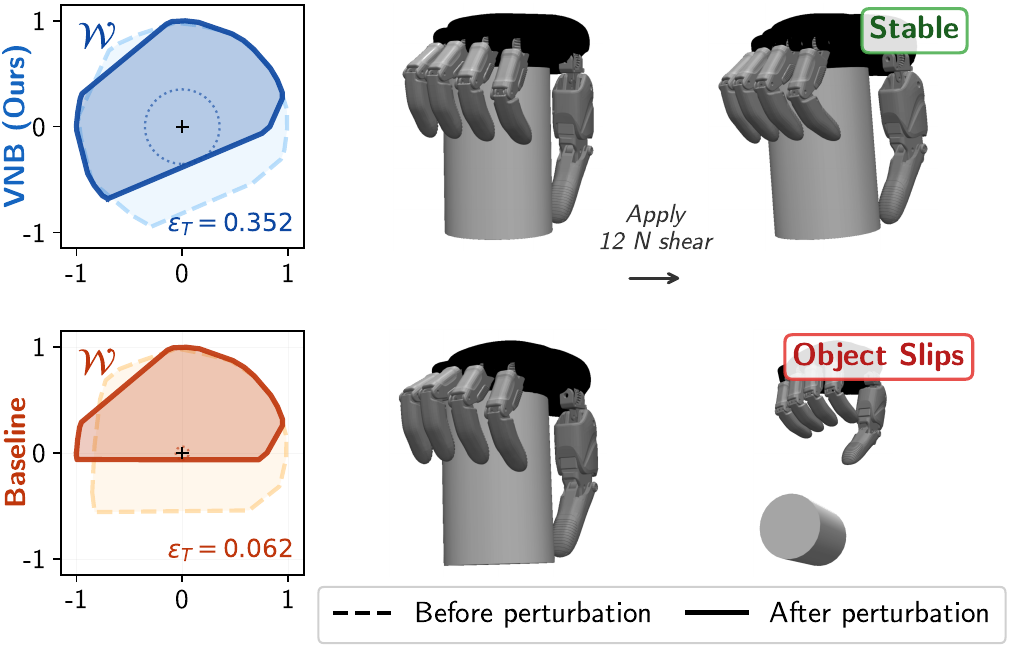}
\caption{This paper develops a variational neural belief (VNB) parameterization for robust grasping, casting grasping as a sequential risk-sensitive decision process over an online contact-geometry belief. The \emph{grasp wrench space} (GWS), \(\mathcal{W}\), (\emph{leftmost column}; axes omitted) is the convex hull of resistible hand wrenches. The dotted circle marks the terminal inscribed-ball radius, \(\varepsilon_T\), reported in the bottom-right corner of each GWS figure. Larger hulls containing the origin indicate greater disturbance rejection. VNB (top row) iteratively refines contact, producing a force-closed GWS that persists after a \SI{12}{N} lateral shear force is applied at the object's center of mass. A baseline planner (bottom row) commits to a single offline force-closed grasp before perturbation, but its GWS collapses under the same force, causing object slip. See \Cref{sec:resdis} for quantitative results.}
\label{fig:method_desc}
\end{figure}

Consider a robotic manipulator fitted with a multi-fingered hand and an externally fixed camera tasked with grasping an object at an unknown pose. The robot faces an epistemic problem: contact parameters that govern grasp stability, namely friction coefficients, surface compliance, and contact modes, are inexact and can only be inferred indirectly from noisy tactile and proprioceptive feedback. Additionally, the accuracy of the computed object pose may be corrupted by camera calibration drift, lighting, or partial camera observations. These sources of uncertainty do not merely constitute a nuisance that may simply be averaged out, since a grasp whose \emph{expected} quality metric is high under a certain object pose and friction prior may still fail catastrophically if the true pose and friction lie in the tails of their respective probability distributions \cite{weisz_pose_2012,liAnalyticTheoryIntrinsic2024a}. Robust grasping therefore demands planning algorithms that explicitly reason about the full posterior distribution over contact and pose uncertainty, and not only its mean.

 Mainstream analytical grasp synthesis pipelines largely ignore uncertainty. Optimization-based planners~\cite{miller_graspit_2004,li_tpgp_2024,liuSynthesizingDiversePhysically2022a} compute penetration-free, force-closure grasps under assumed contact properties, yet the resulting grasps are brittle to friction mismatch and pose error \cite{weisz_pose_2012}. Data-driven methods \cite{newburyDeepLearningApproaches2023,khargonkarNeuralGraspsLearningImplicit2022,wangDexGraspNetLargeScaleRobotic2023} learn grasp proposals from large datasets but optimize expected performance and provide no mechanism to quantify or hedge against contact or pose risk. Methods that explicitly model uncertainty, including Deep Variational Bayes Filters \cite{karl2017deep} and probabilistic grasp learning frameworks such as Dex-Net 2.0 \cite{mahler2017dexnet}, maintain a belief over uncertain parameters. A dominant nonparametric belief representation is the weighted empirical measure defined by a particle set, which enables approximate belief-space optimization~\cite{kaelbling_planning_1998,thrun2005probabilistic}. However, particle-based beliefs inherit several limitations from particle filtering that complicate tail risk estimation. First, standard risk measures such as the Conditional Value-at-Risk (CVaR) require accurate tail estimation, but finite particle sets introduce variance unless particle counts are increased at proportional computational cost \cite{chengImprovedParticleFilter2024}. Second, resampling introduces discrete stochasticity that prevents pathwise gradient propagation. Third, particle allocation limits posterior estimation fidelity, especially in multimodal settings \cite{vermaakMaintainingMultimodalityMixture2003}.

We address these limitations using a \emph{variational neural belief} represented as a continuous distribution over contact parameters with learned transition and observation updates. By sampling Gaussian mixture beliefs with Gumbel-Softmax component selection~\cite{jangCategoricalReparameterizationGumbelSoftmax2017} and location-scale reparameterization~\cite{kingma2013auto}, we obtain pathwise-differentiable samples with respect to the belief parameters. We then evaluate a smooth CVaR surrogate on these samples, propagating gradients through the belief while avoiding discontinuous resampling and the high-variance tail estimates induced by particle-based beliefs~\cite{hongMonteCarloMethods2014}.

Our contributions are as follows:
\vspace{-2pt}
\begin{enumerate}
\renewcommand{\labelenumi}{\roman{enumi}.}
\setlength{\itemsep}{0pt}
\setlength{\parskip}{0pt}
\item \emph{Differentiable Gaussian Mixture Belief}: A Gaussian Mixture Model (GMM) belief representation whose reparameterized samples admit pathwise gradients with respect to all distribution parameters (Section~\ref{sec:belief_nets}).
\item \emph{Risk-Sensitive Grasp Optimization}: A smooth CVaR-based grasp quality objective with a pathwise gradient estimator through the reparameterized belief (Section~\ref{sec:riskgrad}).
\item \emph{Neural Belief Dynamics}: Learned transition and observation models for differentiable belief updates (Section~\ref{sec:bel_dyn}).
\item \emph{Robust Grasping as Information-Guided Risk-Sensitive MPC}: Grasping as a belief-space model-predictive control (MPC) problem that selects finger motion primitives to incrementally increase grasp quality while minimizing failure risk, aiming to achieve force closure under the current belief. Our risk-aware belief optimization framework reshapes the grasp wrench space during execution (see \Cref{fig:method_desc}), expanding the set of disturbances the grasp can resist (\Cref{ssec:varmpc}).
\end{enumerate}
We evaluate dexterous grasping under friction and stiffness uncertainty in MuJoCo across multiple regimes and perturbations, and validate hardware performance under pose uncertainty (Section~\ref{sec:exps}).

\begin{figure}[t]
\centering
\setlength{\fboxsep}{0pt}
\setlength{\fboxrule}{1pt}
\begin{minipage}[c]{0.78\linewidth}
\centering
\begin{minipage}[c]{0.23\linewidth}
    \centering
    \fbox{\includegraphics[width=\linewidth]{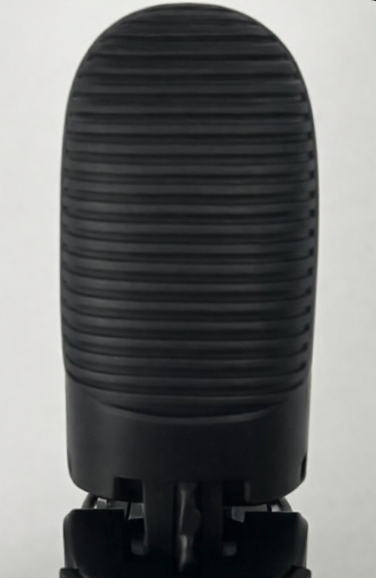}}
    \par\vspace{2pt}
    {\scriptsize\sffamily Piezoresistive\\Tactile Sensor}
\end{minipage}
\hfill
\begin{minipage}[c]{0.75\linewidth}
    \centering
    \begin{tikzpicture}[
        remember picture,
        lbl/.style={fill=white,draw=black,rounded corners=2pt,
                    inner sep=3pt,font=\scriptsize\sffamily},
        lblboxgray/.style={fill=nodeGrayHW,draw=nodeGrayHW,rounded corners=0pt,
                           inner sep=3.5pt,font=\scriptsize\sffamily},
        arr/.style={-{Latex[length=3mm,width=2mm,fill=none]},thin,robotorange!90!white}
    ]
    \node[anchor=south west,inner sep=0] (img) at (0,0)
        {\fbox{\includegraphics[width=\dimexpr\linewidth-2\fboxrule\relax]{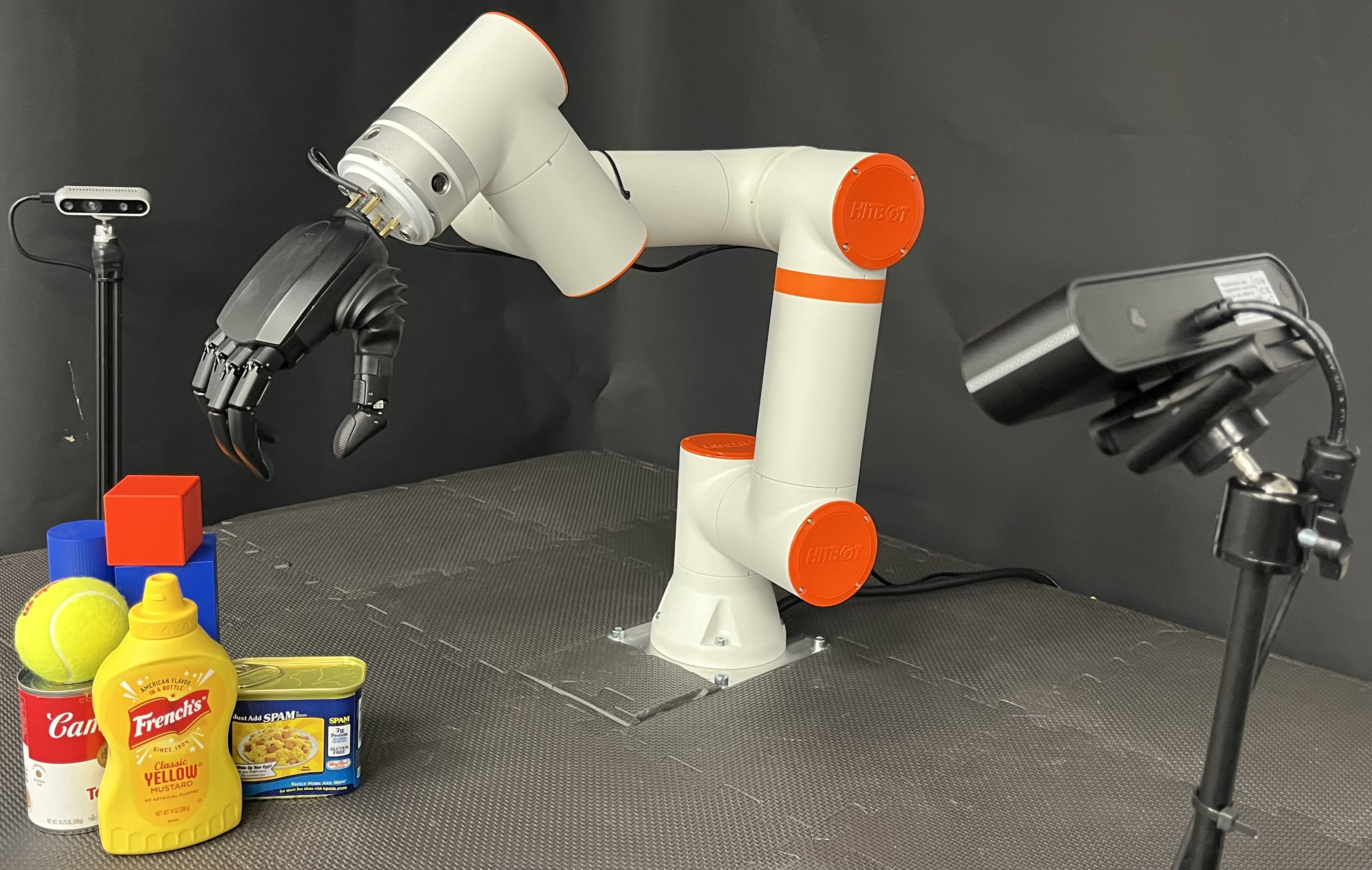}}};
    \begin{scope}[x={(img.south east)},y={(img.north west)},overlay]
        \coordinate (realsense-tgt) at (0.12,0.8);
        \node[lbl] (realsense-lbl) at (0.14,0.95) {RealSense D435i};
        \draw[arr] (realsense-lbl.south) -- (realsense-tgt);
        \node[lblboxgray] at (0.054,0.51) {};
        \node[lblboxgray] at (0.054,0.57) {};
        \coordinate (thumb) at (0.26,0.5);
        \path[draw=black,very thin,robotorange!90!white,even odd rule]
            (thumb) circle[radius=1.7mm];
        \draw[arr] ([shift=(157.4:0.8mm)] thumb) -- ++(-0.36,0.15);
        \coordinate (arm-tgt) at (0.6,0.56);
        \node[lbl,align=center] (arm-lbl) at (0.82,0.88) {Robot Arm\\(6 DoF)};
        \draw[arr] (arm-lbl.south west) -- (arm-tgt);
        \coordinate (orbbec-tgt) at (0.90,0.36);
        \node[lbl] (orbbec-lbl) at (0.75,0.14) {Orbbec Astra Pro Plus};
        \draw[arr] (orbbec-lbl.north) -- (orbbec-tgt);
        \coordinate (hand-tgt) at (0.29,0.69);
        \node[lbl,align=center] (hand-lbl) at (0.34,0.35) {Robot Hand\\(11 DoF)};
        \draw[arr] (hand-lbl.north) -- (hand-tgt);
        \coordinate (obj-tgt) at (0.22,0.18);
        \node[lbl] (obj-lbl) at (0.12,0.08) {Object Set};
        \draw[arr] (obj-lbl.north) -- (obj-tgt);
    \end{scope}
    \end{tikzpicture}
\end{minipage}
\end{minipage}
\caption{\textbf{Hardware Platform}: Our platform comprises a FAIR Innovation FR3 cobot (6 DoF), RealHand L6 robotic hand (11 DoF), two calibrated RGB-D cameras (RealSense D435i and Orbbec Astra Pro Plus), and representative primitives and YCB objects. We compare VNB-MPC with a Gaussian baseline under object-pose uncertainty and report results in \Cref{sec:resdis}, with hardware grasps shown in Fig.~\ref{fig:hwgrasps}.}
\label{fig:hwsetup}
\end{figure}

\subsection{Related Work}\label{sec:relwk}
\subsubsection{Belief-Space Planning for Dexterous Manipulation}
Belief-space planning represents uncertainty over latent variables such as object pose and contact parameters using probability distributions~\cite{platt2010belief,petrovskaya2011global}. Particle filters are widely used due to their flexibility and multimodal capacity~\cite{thrun2005probabilistic,dyro_particle_2021,vandenberglqgmp2011}, but provide discrete approximations that suffer from sample degeneracy and do not support pathwise gradient propagation. Risk-aware methods incorporate coherent measures such as CVaR to account for tail uncertainty~\cite{hakobyan2019risk}, yet particle-based beliefs require sampling-based gradient estimates. Our approach replaces particles with a continuous differentiable belief representation, enabling gradient-based optimization of risk-sensitive grasp objectives.

\subsubsection{Neural Uncertainty Representations}\label{ssec:neuunc}
Variational inference provides differentiable uncertainty representations optimized via gradient-based learning~\cite{blei2017variational}. Gaussian mixtures support multimodal modeling with reparameterized sampling~\cite{kingma2013auto}, and neural probabilistic models have been used to represent uncertainty in dynamics for planning~\cite{depeweg2017learning,chua2018deep}. Distributional reinforcement learning models return uncertainty~\cite{enwerem2025safety,dabney2018distributional}, but do not capture physical contact uncertainty. In contrast, our method represents uncertainty directly over contact and pose parameters, enabling multimodal belief updates and pathwise risk optimization beyond mean-performance Gaussian MPC~\cite{pan_probabilistic_2014}.

\section{Problem Formulation}\label{sec:prbform}
We consider dexterous grasping under partial observability, where a robotic manipulator 
equipped with a multi-fingered hand interacts with a stationary object (see \Cref{fig:hwsetup} for our hardware setup) whose pose and contact properties are uncertain. We assume an obstacle-free tabletop environment and a 6-DoF arm positioned at a fixed pre-grasp configuration; only the fingers move during grasp execution, and time is indexed discretely from \(t=0\). At each discrete time, \(t=0,1,\ldots\), the system applies an action \(\mathbf{a}_t \in \mathcal{A}\) (a 6-dimensional vector of finger joint velocity commands), receives an observation \(\mathbf{o}_t\) (comprising an RGB-D-derived pose estimate with covariance, per-finger contact forces from piezoresistive tactile sensors, and joint encoder readings), and maintains a belief over latent (non-observable) physical parameters governing grasp stability.

\emph{Notation}: We use \(\boldsymbol{\theta}\) for latent physical parameters, \(\boldsymbol{\phi}\) for Gaussian-mixture belief parameters, and \(\mathbf{w}\in\mathbb{R}^6\) for object-frame contact wrenches. The set of feasible object wrenches is denoted by \(\mathcal{W}\).

\begin{figure*}[t]
\centering
\includestandalone[width=.9\linewidth]{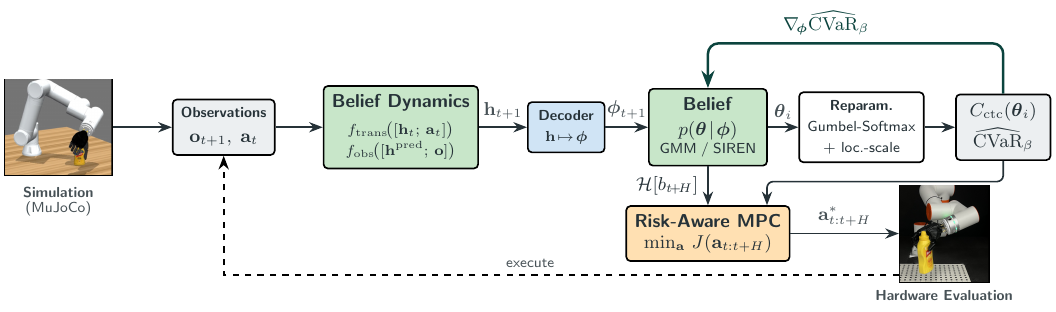}
\vspace{-10pt}
\caption{\textbf{Overview of Our Proposed Variational Neural Belief Grasping Framework.}
At each decision step, the neural belief dynamics update a latent embedding~\(\mathbf{h}_t\) via a prediction network~\(f_{\mathrm{trans}}\) conditioned on actions and a correction network~\(f_{\mathrm{obs}}\) conditioned on observations (joint angles, pose estimate with covariance, and contact geometry). We then evaluate grasp robustness under multimodal uncertainty along separate but connected simulation and hardware axes, with simulation providing controlled evaluation under known friction regimes, and hardware trials validating real-world transfer under object pose uncertainty (see \Cref{sec:exps}).}
\label{fig:frmwk}
\end{figure*}

\begin{definition}[Latent State]\label{def:latst}
The latent state at time \(t\) is
\begin{equation}\label{eq:thetadef}
\begin{array}{l}
\boldsymbol{\theta}_t =
\bigl(
\boldsymbol{\xi}_{o,t},
\boldsymbol{\psi}_t
\bigr)
\in \Theta,
\end{array}
\end{equation}
where \(\boldsymbol{\xi}_{o,t} \in SE(3)\) is the object pose, \(\boldsymbol{\psi}_t = \{\psi_t^i\}_{i=1}^{n_t^c}\), with \(\psi_t^i = (\mu_t^i,\kappa_t^i,d_t^i,s_t^i)\), comprises per-contact parameters at the \(n_t^c\) active contacts: friction coefficient \(\mu_t^i\), stiffness \(\kappa_t^i\), contact damping \(d_t^i\), and slip velocity magnitude \(s_t^i\), and \(\Theta\) is the set of possible values of latent states. The contact count satisfies \(0 \le n_t^c \le n_{\mathrm{max}}^c\), where \(n_{\mathrm{max}}^c \in \mathbb{N}\) is the maximum possible number of contacts. In our experiments, \(n_{\mathrm{max}}^c = 5\), matching the five piezoresistive tactile sensors on the robotic hand used in our hardware experiments (see \Cref{ssec:setup}).
\end{definition}

\noindent Since the latent state is not directly observable, the robot must instead reason over a probability distribution conditioned on its interaction history.

\begin{definition}[Belief State]\label{def:belief}
Let \(p(\cdot)\) denote the underlying probability density over latent parameters. The belief at time \(t\) is the posterior distribution
\begin{equation}
\begin{array}{l}
b_t(\boldsymbol{\theta}) = p(\boldsymbol{\theta}_t\mid\mathbf{o}_{1:t},\mathbf{a}_{0:t-1}),
\end{array}
\end{equation}
where \(\mathbf{o}_{1:t}\) and \(\mathbf{a}_{0:t-1}\) denote the observation and action histories.
\end{definition}

\noindent Observations comprise object pose estimates and contact-related signals. 
Specifically, contact-related signals include per-finger normal forces from piezoresistive tactile sensors, binary contact indicators, and joint position/velocity readings. Let \(\mathbf{o}_t = (\hat{\boldsymbol{\xi}}_t,\boldsymbol{\Sigma}_t^{\mathrm{pose}},\mathbf{y}^{\mathrm{tact}}_t,\mathbf{q}_t,\dot{\mathbf{q}}_t,\hat{o}_t,\hat{c}_t)\) denote the observation at time \(t\), where \(\hat{\boldsymbol{\xi}}_t \in SE(3)\) is the estimated object pose, \(\boldsymbol{\Sigma}_t^{\mathrm{pose}} \in \mathbb{R}^{6\times 6}\) is the pose estimation covariance derived from the ICP registration residual (see Section~\ref{ssec:cameras}), \(\mathbf{y}^{\mathrm{tact}}_t\) collects tactile measurements, \(\mathbf{q}_t,\dot{\mathbf{q}}_t\) are joint positions and velocities, and \(\hat{o}_t,\hat{c}_t \in [0,1]\) are occlusion and segmentation confidence scores. For compactness, we factor the observation likelihood into a pose term, a tactile/contact term, and a visual-reliability term, as in
\begin{equation}\label{eq:obs_model}
\begin{aligned}
p(\mathbf{o}_t \mid \boldsymbol{\theta}_t) \propto\;&
\mathcal{N}\!\bigl(
\hat{\boldsymbol{\xi}}_t \mid
\boldsymbol{\xi}_{o,t},
\boldsymbol{\Sigma}_t^{\mathrm{pose}}
\bigr) \\
&\times\,
p(\mathbf{y}^{\mathrm{tact}}_t,\mathbf{q}_t,\dot{\mathbf{q}}_t
\mid \boldsymbol{\psi}_t) \\
&\times\,
p(\hat{o}_t,\hat{c}_t
\mid \boldsymbol{\theta}_t),
\end{aligned}
\end{equation}
where the Gaussian term models pose estimation noise and the final factor captures observation reliability due to occlusion and segmentation uncertainty. The occlusion score~\(\hat{o}_t\) is computed as the fraction of object point-cloud points occluded by the hand mesh, and the segmentation score~\(\hat{c}_t\) is the Intersection-over-Union between the predicted and ground-truth object masks (see \Cref{ssec:cameras}).

\begin{definition}[Grasp Wrench Space Under Uncertainty]\label{def:wrsp}
Let \(\boldsymbol{\theta}=(\boldsymbol{\xi}_o,\boldsymbol{\psi})\) collect
the latent contact geometry and friction parameters. For hand configuration
\(\mathbf{q}_t\) and latent state \(\boldsymbol{\theta}_t\), the grasp wrench
space is
\begin{equation}\label{eq:wset}
\mathcal{W}(\mathbf{q}_t,\boldsymbol{\theta}_t)=
\bigl\{\mathbf{G}(\mathbf{q}_t,\boldsymbol{\xi}_{o,t})\,\mathbf{f}
\mid \mathbf{f}\in\mathcal{F}(\mathbf{q}_t,\boldsymbol{\psi}_t)\bigr\},
\end{equation}
where \(\mathbf{G}(\mathbf{q},\boldsymbol{\xi}_o)\) is the grasp
map~\cite{murrayMathematicalIntroductionRobotic2017a}, the linear mapping
from contact forces to object wrenches, and
\(\mathcal{F}(\mathbf{q},\boldsymbol{\psi})\) is the admissible contact force
set defined by Coulomb friction cones and unilateral contact
constraints~\cite{Prattichizzo2008}.
\end{definition}

\noindent Since \(\boldsymbol{\theta}_t\) is uncertain under the belief \(b_t\), the wrench space is a random set.
\begin{definition}[Risk-Sensitive Grasp Cost]\label{def:rsqual}
Let \(C(\mathbf{q}_t, \boldsymbol{\theta}_t)\) denote the differentiable grasp cost functional (detailed in Section~\ref{sec:riskgrad}), which combines contact stability, friction margins, and slip penalties. The risk-sensitive grasp cost is defined as
\begin{equation}
\begin{array}{l}
J(\mathbf{q}_t; b_t) = \rho_\beta\big(C(\mathbf{q}_t, \boldsymbol{\theta})\big),
\quad
\boldsymbol{\theta} \sim b_t,
\end{array}
\end{equation}
where \(\rho_\beta\) is a coherent risk measure. We use CVaR (Conditional Value-at-Risk), which averages the worst \((1{-}\beta)\) fraction of outcomes~\cite{rockafellar2000optimization} (see Section~\ref{sec:riskgrad} for a formal definition).
\end{definition}
\noindent Definition~\ref{def:rsqual} captures worst-case grasp cost over belief mass, rather than expected performance alone.

\begin{problem}[Risk-Sensitive Belief-Space Grasping]\label{prob:belctrl}
Determine an action sequence \(\mathbf{a}_{0:T-1}\)
that minimizes the cumulative risk-sensitive grasp cost subject to belief dynamics \(b_{t+1} =\mathcal{B}(b_t,\mathbf{a}_t,\mathbf{o}_{t+1})\), where \(\mathcal{B}\) is the belief update operator realized by Algorithm~\ref{alg:belupd} (Section~\ref{sec:bel_dyn}):
\vspace{-4pt}
\begin{equation}\label{eq:riskobj}
\begin{array}{l}
\min_{\mathbf{a}_{0:T-1}}
\sum_{t=0}^{T-1}
\rho_\beta\!\big(C(\boldsymbol{\theta}_t,\mathbf{a}_t)\big)
\end{array}
\end{equation}
\end{problem}

\noindent We assume that the initial belief \(b_0\) and the bounded problem horizon \(T \le T_{\mathrm{max}} \in \mathbb{N}\) are known. Problem~\ref{prob:belctrl} defines a risk-sensitive belief-space optimal control problem, where optimal actions must hedge against latent contact uncertainty. Solving this problem requires a belief representation that supports efficient sampling and optimization of risk-sensitive objectives. We address this challenge by introducing a differentiable variational belief parameterization next.

\section{Approach: Variational Neural Belief Networks}\label{sec:belief_nets}
Our solution (\Cref{fig:frmwk}) represents the belief as a Gaussian mixture whose samples are differentiable functions of the distribution parameters via Gumbel-Softmax and location-scale reparameterization. This reparameterization closes the gradient gap left open by particle filters, connecting the CVaR objective directly to the belief's sufficient statistics. We learn belief dynamics from data with neural networks, and optimize actions over a receding horizon with a gradient-based MPC planner.
\subsection{Gaussian Mixture Beliefs}\label{sec:gmm_belief}
We parameterize the belief as a mixture of \(K\) diagonal-covariance Gaussians
\begin{equation}\label{eq:gmm}
\begin{array}{l}
    p(\boldsymbol{\theta} \mid \boldsymbol{\phi}) = \sum_{k=1}^{K} \pi_k\, \mathcal{N}\!\bigl(\boldsymbol{\theta} \;\big|\; \boldsymbol{\mu}_k,\, \mathrm{diag}(\boldsymbol{\sigma}_k^2)\bigr),
\end{array}
\end{equation}
where \(\boldsymbol{\phi} = \{\boldsymbol{\ell}, \{\boldsymbol{\mu}_k, \log\boldsymbol{\sigma}_k\}_{k=1}^K\}\) collects the mixture logits \(\boldsymbol{\ell} = [\ell_1, \ell_2, \ldots, \ell_K]^\top \in \mathbb{R}^K\) and per-component means and log-standard-deviations. In typical mixture sampling, one first draws a component index, \(k \sim \mathrm{Categorical}(\boldsymbol{\pi})\), and then samples \(\boldsymbol{\theta} \sim \mathcal{N}(\boldsymbol{\mu}_k, \mathrm{diag}(\boldsymbol{\sigma}_k^2))\). However, since this discrete component selection is non-differentiable, we restore gradient flow through two mechanisms applied jointly, namely the Gumbel-Softmax relaxation~\cite{jangCategoricalReparameterizationGumbelSoftmax2017} of the categorical draw and the standard location-scale reparameterization. The former produces relaxed assignment weights, \(\boldsymbol{\zeta}\) that are differentiable with respect to the logits, with \(\boldsymbol{\zeta}=\left[\zeta_1, \ldots, \zeta_K\right]^{\top} \in \Delta^{K-1}\), where \(\Delta^{K-1}\) is the \((K{-}1)\)-dimensional probability simplex in \(\mathbb{R}^K\), i.e., the set \(\{\boldsymbol{\zeta}\in\mathbb{R}^K_{\ge0}\colon \sum_k \zeta_k = 1\}\). The latter proceeds by setting \(\boldsymbol{\theta}_k = \boldsymbol{\mu}_k + \boldsymbol{\sigma}_k \odot \boldsymbol{\epsilon}\), \(\boldsymbol{\epsilon} \sim \mathcal{N}(\mathbf{0}, \mathbf{I})\), making each component sample differentiable with respect to \(\boldsymbol{\mu}_k\) and \(\log \boldsymbol{\sigma}_k\) (\(\odot\) denotes the element-wise Hadamard product). 

Applying the relaxation and location-scale reformulation steps, we can then write a single reparameterized GMM sample as
\begin{equation}\label{eq:rsample}
\begin{array}{l}
\boldsymbol{\theta}
=\sum_{k=1}^{K}
\zeta_k \bigl(\boldsymbol{\mu}_k + \boldsymbol{\sigma}_k \odot \boldsymbol{\epsilon}_k\bigr),\quad\boldsymbol{\epsilon}_k \sim \mathcal{N}(\mathbf{0}, \mathbf{I}),
\end{array}
\end{equation}
where the weights \(\zeta_k\) are obtained via the Gumbel-Softmax relaxation
\begin{equation}\label{eq:gumbel_softmax}
\zeta_k =
\frac{\exp\!\left((\ell_k + g_k)/\tau\right)}
{\sum_{j=1}^{K} \exp\!\left((\ell_j + g_j)/\tau\right)},
\quad
g_k \sim \mathrm{Gumbel}(0,1),
\end{equation}
with temperature \(\tau>0\) controlling exploration--exploitation: low \(\tau\) yields near-deterministic component selection, while high \(\tau\) blends hypotheses. Every operation in~\eqref{eq:rsample} is differentiable with respect to \(\boldsymbol{\ell}\), \(\boldsymbol{\mu}_k\), and \(\log \boldsymbol{\sigma}_k\)\footnote{For highly multimodal posteriors that resist mixture approximation, our framework also supports an implicit neural belief using sinusoidal representation networks (SIREN)~\cite{sitzmann2020implicit} with Langevin sampling \cite{xuGlobalConvergenceLangevin2018}; see \Cref{apx:belief_nets}.}.

\subsection{Neural Belief Dynamics}\label{sec:bel_dyn}
Instead of specifying a hand-crafted process and observation model, we learn differentiable belief dynamics, with the belief defined by a latent embedding \(\mathbf{h}_t \in \mathbb{R}^{d_h}\) that is propagated through a neural transition model, and subsequently corrected using observations
\begin{equation}
\begin{array}{l}
\mathbf{h}_{t+1}^{\mathrm{pred}} = f_{\mathrm{trans}}\!\bigl([\mathbf{h}_t;\mathbf{a}_t]\bigr),\\
\mathbf{h}_{t+1} = f_{\mathrm{obs}}\!\bigl([\mathbf{h}_{t+1}^{\mathrm{pred}};\mathbf{o}_{t+1}]\bigr) \label{eq:obs}
\end{array}
\end{equation}
where \(f_{\mathrm{trans}}\) and \(f_{\mathrm{obs}}\) are multi-layer networks. A decoder (encapsulated by the \(\mathsf{DecodeBelief}\) subroutine; see Table~\ref{tab:hyperparams_full} for the architecture) maps \(\mathbf{h}_{t+1}\) to belief parameters \(\boldsymbol{\phi}_{t+1}\), yielding the updated belief \(b_{t+1}\).

\section{Pathwise Risk Gradients via Action Optimization}
\label{sec:riskgrad}
In this section, we derive the gradient of the CVaR objective \eqref{eq:riskobj} with respect to the planned action, enabling direct gradient-based grasp optimization. \Cref{apx:riskgrad} gives the corresponding reparameterized-belief proof.

\subsection{Differentiable CVaR via the Dual Representation}\label{ssec:diffcvar}
Let \(C\colon \mathbb{R}^d \to \mathbb{R}\) be a differentiable cost depending on latent contact parameters \(\boldsymbol{\theta}\) and action \(\mathbf{a}\). The CVaR at confidence level \(\beta \in (0,1)\) admits the dual form~\cite{rockafellar2000optimization}:
\vspace{-4pt}
\begin{equation}\label{eq:cvar_dual}
\mathrm{CVaR}_\beta[C] = \min_{\eta\in\mathbb{R}} \left\{\eta + \frac{1}{1-\beta}
\mathbb{E}\!\left[\max\!\big(C(\boldsymbol{\theta},\mathbf{a})-\eta,0\big)\right]\right\},
\end{equation}
where the expectation in \eqref{eq:cvar_dual} is taken over realizations of its argument corresponding to random values of \(\boldsymbol{\theta}\) drawn from the belief distribution, \({b}\). We set \(\eta\) to the empirical \(\beta\)-quantile and approximate the hinge with a softplus over \(N\) stochastic contact parameter realizations under the current belief, with \(\hat{\eta}\) capturing the empirical VaR threshold that defines the tail region:
\begin{equation}\label{eq:cvar_soft}
\begin{array}{l}
\widehat{\mathrm{CVaR}}_\beta = \hat\eta + \frac{1}{(1-\beta)N} \sum_{i=1}^{N}
\mathsf{SoftPlus}(C(\boldsymbol{\theta}_i, \mathbf{a}) - \hat\eta;\kappa_\rho),
\end{array}
\end{equation}
where \(\boldsymbol{\theta}_i = g(\boldsymbol{\epsilon}_i,\boldsymbol{\phi})\) are reparameterized samples. The \(\mathsf{SoftPlus}(x ; \kappa_\rho)\) function, defined as \(\kappa_\rho^{-1} \log \left(1+e^{\kappa_{\rho} x}\right)\), is a smooth approximation of the hinge \((x)_{+}\), where \(\kappa_\rho\) controls the sharpness of the approximation. This makes \(\widehat{\mathrm{CVaR}}_\beta\) differentiable with respect to both \(\mathbf{a}\) and \(\boldsymbol{\phi}\).

\begin{remark}[\(\nabla_{\mathbf{a}}\,\widehat{\mathrm{CVaR}}_\beta\) Computation]\label{prop:action_grad}
For reparameterizable belief \(b(\boldsymbol{\phi})\) and differentiable cost \(C\), automatic differentiation yields:
\begin{equation}\label{eq:cvargrad}
\begin{array}{l}
\nabla_{\mathbf{a}} \widehat{\mathrm{CVaR}}_\beta = \frac{1}{(1-\beta)N} \sum_{i=1}^{N} \sigma_{\kappa}(C_i - \hat\eta)
\nabla_{\mathbf{a}} C(\boldsymbol{\theta}_i, \mathbf{a}),
\end{array}
\end{equation}
where \(C_i = C(\boldsymbol{\theta}_i,\mathbf{a})\) and \(\sigma_{\kappa}\) is the sigmoid with sharpness~\(\kappa_\rho\). The softplus bias \(\kappa_\rho^{-1}\log 2\) is negligible for \(\kappa_\rho=5\). Unlike particle filters, this formulation enables pathwise gradient-based action optimization without resampling.
\end{remark}

\subsection{Differentiable Multimodal Cost Function}
We define a differentiable cost over uncertain contact realizations. For
\(\boldsymbol{\theta}=(\boldsymbol{\xi}_o,\boldsymbol{\psi})\) and action
\(\mathbf{a}\), let \(\mathbf{q}(\mathbf{a})\) denote the hand configuration
reached by executing \(\mathbf{a}\) from the current state, and let
\(\mathcal{F}(\mathbf{q}(\mathbf{a}),\boldsymbol{\psi})\) and
\(\mathcal{W}(\mathbf{q}(\mathbf{a}),\boldsymbol{\theta})\) be as in
Definition~\ref{def:wrsp}. We evaluate grasp stability through the decomposition
\begin{equation}\label{eq:cost_all}
C(\boldsymbol{\theta},\mathbf{a}) = C_{\mathrm{str}}(\mathbf{a}) + C_{\rho}(\mathbf{a},\boldsymbol{\psi}) + C_{\mathrm{ctc}}(\boldsymbol{\psi}), 
\end{equation}
into closure strength (\(C_{\mathrm{str}}\)), friction-limited robustness (\(C_{\rho}\)), and contact stability (\(C_{\mathrm{ctc}}\)), respectively given by
\begin{equation}\label{eq:cost_dec}
\begin{aligned}
C_{\mathrm{str}}(\mathbf{a})
&= -\alpha_s\bar{a} - \alpha_g G_{\mathrm{close}}(\mathbf{a}), \\[-1pt]
C_{\rho}(\mathbf{a},\boldsymbol{\psi})
&= -\alpha_r\,\mathsf{ReLU}\bigl(\alpha_n\bar{a}
   - \min_i\mu_i^{-1}\bigr), \\[-1pt]
C_{\mathrm{ctc}}(\boldsymbol{\psi})
&= \frac{1}{n_t^c}\sum_{i=1}^{n_t^c}
   \!\Bigl[e^{-\mu_i}+e^{-\kappa_i}
   +\mathsf{SoftPlus}(s_i)\Bigr],
\end{aligned}
\end{equation}
 where \(\bar{a}=\frac{1}{n_j}\sum_j a_j\) is the mean joint closing rate with \(n_j=6\) actuated finger joints,
\(G_{\mathrm{close}}(\mathbf{a}) = -\sum_{i=1}^{n_j}\bigl\|\,p_i(\mathbf{q}(\mathbf{a})) - c_{\mathrm{obj}}\bigr\|\) is a differentiable closure score computed as the negative sum of fingertip-to-object-center distances from forward kinematics, and \((\mu_i,\kappa_i,s_i)\) denote friction, stiffness, and slip at contact \(i\). The positive weights (\(\alpha_s, \alpha_g, \alpha_r, \text{ and }\alpha_n\)) scale the respective cost terms (see Table~\ref{tab:hyperparams_full}).

\subsection{Visual Perception Cost}\label{sec:vis_cost}%
\noindent We define a perception cost \(C_{\mathrm{vis}}(\mathbf{o}_t)\) that penalizes visually-uncertain states
\vspace{-4pt}
\begin{equation}\label{eq:cost_vis}
\omega_{\mathrm{pose}}\,\mathrm{tr}(\boldsymbol{\Sigma}_t^{\mathrm{pose}}) + \omega_{\mathrm{occ}}\,\mathsf{SoftPlus}(\hat{o}_t) + \omega_{\mathrm{seg}}\,(1 - \hat{c}_t),
\end{equation}
where \(\boldsymbol{\Sigma}_t^{\mathrm{pose}}\), \(\hat{o}_t\), and \(\hat{c}_t\) are as in~\eqref{eq:obs_model}. We set \(\lambda_v = 0\) in simulation and evaluate \(C_{\mathrm{vis}}\) only in hardware trials.

\subsection{Failure Probability}\label{sec:failure_prob}
We define the belief-space failure probability as the mass assigned to latent parameters under which the grasp cost exceeds a stability threshold \(\tau_f\):
\vspace{-4pt}
\begin{equation}\label{eq:fail_prob}
\begin{array}{l}
\hat{P}_{\mathrm{fail}}(b_t)=\Pr_{\boldsymbol{\theta}\sim b_t}\!\big(C(\boldsymbol{\theta},\mathbf{a}_t)>\tau_f\big).
\end{array}
\end{equation}
In practice, we estimate \eqref{eq:fail_prob} via Monte Carlo using \(N\) samples
\begin{equation}\label{eq:fail_mc}
\begin{array}{l}
\hat{P}_{\mathrm{fail}} =
\frac{1}{N}
\sum_{i=1}^{N}
\mathbf{1}\!\left[C(\boldsymbol{\theta}_i, \mathbf{a}_t) > \tau_f\right],
\end{array}
\end{equation}
where \(\boldsymbol{\theta}_i \sim b_t\). For the variational belief, we replace the hard indicator with a sigmoid of sharpness~\(\kappa_f\) to preserve differentiability, following standard smooth surrogate constructions used in variational inference and risk-sensitive optimization~\cite{kingma2013auto,rockafellar2000optimization}:
\begin{equation}\label{eq:fail_soft}
\begin{array}{l}
\hat{P}_{\mathrm{fail}}^{\mathrm{soft}} =
\frac{1}{N}
\sum_{i=1}^{N}
\sigma\!\bigl(\kappa_f(C(\boldsymbol{\theta}_i,\mathbf{a}_t)-\tau_f)\bigr).
\end{array}
\end{equation}
In our framework, failure probability, CVaR, and belief entropy measure complementary aspects of robustness: \(\hat{P}_{\mathrm{fail}}\) quantifies the mass of unstable configurations, \(\mathrm{CVaR}_\beta\) their severity, and \(\mathcal{H}[b_t]\) belief spread or entropy.

\begin{algorithm}[t]
\caption{VNB--MPC for Grasping}\label{alg:vnbmpc}
\SetAlgoVlined
\DontPrintSemicolon

\KwIn{Belief params.\ \(\boldsymbol{\phi}_0\), CVaR level \(\beta\), failure bound \(\delta\), target \(\varepsilon\) (\(\varepsilon_{\mathrm{des}}\)), horizon \(H\), grad.\ steps \(M\), step size \(\alpha_u\), samples \(N\), max steps \(T_{\mathrm{max}}\)}

\(t\gets0,\ \varepsilon\gets0\)

\While{\(\varepsilon<\varepsilon_{\mathrm{des}}\) \textbf{and} \(t<T_{\mathrm{max}}\)}{

\textbf{Per-component action optimization}\;

\For{\(k=1,\dots,K\)}{

\(\{\boldsymbol{\theta}_i^{(k)}\}_{i=1}^N \gets g(\boldsymbol{\epsilon}_i,\boldsymbol{\phi}_k)\)\tcp*{reparam.~\eqref{eq:rsample}}

\(\mathbf{a}_{t:t+H-1}^{(k)}\gets\mathbf{a}_{\mathrm{init}}\)

\For{\(j=1,\dots,M\)}{

\(\mathbf{a}_{t:t+H-1}^{(k)}
\gets
\mathbf{a}_{t:t+H-1}^{(k)}
-\alpha_u\nabla_{\mathbf a}
\widehat{\mathrm{CVaR}}_\beta[C(\boldsymbol{\theta}^{(k)},\mathbf{a}_{t:t+H-1}^{(k)})]\)\tcp*{\eqref{eq:cvargrad}}

}

\(J^{(k)}\gets
\widehat{\mathrm{CVaR}}_\beta[C(\boldsymbol{\theta}^{(k)},\mathbf{a}_{t:t+H-1}^{(k)})]
+\gamma\,\mathcal{H}[b_{t+H}]\)\tcp*{\eqref{eq:mpcobj}}

\(\hat P_{\mathrm{fail}}^{(k)}\gets\) failure prob.\ via~\eqref{eq:fail_soft}

}

\(k^\star\gets\arg\min_k J^{(k)}\) \textbf{s.t.} \(\hat P_{\mathrm{fail}}^{(k)}\le\delta\)

Execute \(\mathbf a_t\gets\mathbf a_t^{(k^\star)}\), observe \(\mathbf o_{t+1}\)

\(\boldsymbol{\phi}_{t+1}\leftarrow
\mathsf{NeuralBeliefUpdate}(\boldsymbol{\phi}_t,\mathbf a_t,\mathbf o_{t+1})\)

\uIf{simulation}{
    \({\displaystyle\varepsilon\gets\min_{\|\mathbf{d}\|=1}\max_{\mathbf{w}\in\mathcal{W}}
    \mathbf{d}^\top\mathbf{w}}\)\tcp*{see~\cite{ferrari1992planning}}
}
\Else{
    \(\varepsilon\gets\hat{\varepsilon}:=\mathcal{Q}(\mathbf{o}_{t+1})\)\tcp*{hw.\ proxy~(App.~\ref{apx:ehatcomp})}
}

\(t\gets t+1\)

}

\Return grasp with quality \(\varepsilon\)

\end{algorithm}

\begin{algorithm}[t]
\caption{\(\mathsf{NeuralBeliefUpdate}\)}\label{alg:belupd}
\SetAlgoVlined
\DontPrintSemicolon
\KwIn{Latent state \(\mathbf{h}_t\), action \(\mathbf{a}_t\), observation \(\mathbf{o}_{t+1}\), EMA rate \(\alpha_{\mathrm{ema}}\)}
\KwOut{Updated belief \(b_{t+1}\) parameterized by \(\boldsymbol{\phi}_{t+1}\)}
\(\mathbf{h}_{t+1}^{\mathrm{pred}}
\leftarrow
f_{\mathrm{trans}}\!\left(
\begin{bmatrix}
\mathbf{h}_t,\,\mathbf{a}_t
\end{bmatrix}
\right)\)\tcp*[r]{Prediction~\eqref{eq:obs}}
\(\mathbf{h}_{t+1}
\leftarrow
f_{\mathrm{obs}}\!\left(
\begin{bmatrix}
\mathbf{h}_{t+1}^{\mathrm{pred}},\,\mathbf{o}_{t+1}
\end{bmatrix}
\right)\)\tcp*[r]{Correction~\eqref{eq:obs}}
\(\boldsymbol{\phi}_{t+1}
\leftarrow
(1-\alpha_{\mathrm{ema}})\,\boldsymbol{\phi}_t + \alpha_{\mathrm{ema}}\,\mathsf{DecodeBelief}(\mathbf{h}_{t+1})\)%

\Return \(b_{t+1}=p(\boldsymbol{\theta}_{t+1} \mid \boldsymbol{\phi}_{t+1})\)
\end{algorithm}

\subsection{MPC for Grasping with Variational Beliefs}\label{ssec:varmpc}
At the core of our method is a belief-space model-predictive control (MPC) grasping formulation. At each step, the controller selects an action sequence that minimizes a risk-aware objective combining contact risk, perception cost, and belief entropy
\begin{equation}\label{eq:mpcobj}
\begin{aligned}
J(\mathbf{a}_{t:t+H})
& = \lambda_c\,\mathrm{CVaR}_\beta[C_t] + (1-\lambda_c)\,\mathbb{E}_{\boldsymbol{\theta}\sim b_t}[C_t] \\
&\quad + \lambda_v\, C_{\mathrm{vis}} + \gamma\,\mathcal{H}[b_{t+H}],
\end{aligned}
\end{equation}
where \(C_t=C(\boldsymbol{\theta},\mathbf{a}_t)\), \(\lambda_c \in [0,1]\) blends expected cost (\(\lambda_c=0\)) with full CVaR (\(\lambda_c=1\)), \(\mathcal{H}[b_{t+H}]\) denotes the terminal belief entropy, encouraging information-gathering actions, \(C_{\mathrm{vis}}\) is the visual perception cost~\eqref{eq:cost_vis}, and the weights \(\lambda_v \ge 0\) and \(\gamma\ge0\) balance perception and exploration (we set \(\lambda_v=0\) in simulation experiments). To compute gradients of the contact-risk term with respect to the action, we differentiate through the sample mean and the soft CVaR approximation~\eqref{eq:cvar_soft} of the contact cost
\(C(\boldsymbol{\theta},\mathbf{a}_t)\)
\begin{equation}\label{eq:mpc_grad}
\begin{aligned}
\nabla_{\mathbf{a}_t} J_{\mathrm{contact}}
&=
\frac{1-\lambda_c}{N}\sum_{i=1}^{N}
\nabla_{\mathbf{a}_t} C_i \\
&\quad+
\frac{\lambda_c}{(1-\beta)N}\sum_{i=1}^{N}
\sigma_{\kappa}(C_i-\hat{\eta})
\nabla_{\mathbf{a}_t} C_i,
\end{aligned}
\end{equation}
where \(C_i=C(\boldsymbol{\theta}_i,\mathbf{a}_t)\), and \(\sigma_{\kappa}\) denotes a sigmoid with sharpness~\(\kappa\). The remaining differentiable terms in~\eqref{eq:mpcobj} add their standard gradients. This formulation enables direct gradient-based action optimization, providing finer control updates than sampling-based methods such as the Cross-Entropy Method (CEM)~\cite{pinneri_sample-efficient_2021}. To mitigate local minima we use multi-start optimization with three initial action magnitudes and select the action with the lowest risk-aware objective.

Given the MPC objective in \eqref{eq:mpcobj}, \Cref{alg:vnbmpc} presents our VNB--MPC solution routine that proceeds thus: the controller maintains a \(K\)-component Gaussian mixture belief over object pose and contact parameters. Because multimodal beliefs represent distinct contact hypotheses, we perform \emph{per-component} action optimization: the controller optimizes actions independently for each mixture component and selects the one with the lowest risk-aware cost. This avoids optimizing against an averaged belief and lets the controller hedge against pose ambiguity, where a single finger motion can both disambiguate hypotheses and avoid catastrophic contact errors. We write \(\hat{\varepsilon}=\mathcal{Q}(\mathbf{o})\) for the Ferrari--Canny grasp quality~\cite{ferrari1992planning} estimated from tactile observations (see \Cref{apx:ehatcomp}). Since the focus of this work is on belief representations, we have omitted detailed derivations of the underlying motion controller and assume a belief-space controller that produces kinematically feasible, self-collision-free finger motions given sampled pose and contact realizations from the belief.

\section{Experiments}\label{sec:exps}
\subsection{Simulation Sandbox and Hardware Setup}\label{ssec:setup}

\noindent\textbf{Simulation Environment:}
\noindent We evaluate \Cref{alg:vnbmpc} on a dexterous grasping task using an 11-DoF hand attached to a 6-DoF arm in MuJoCo. We sample contact stiffness from \(\kappa \sim \mathcal{N}(1000, 250^2)\)\,N/m, to yield stable simulation across all tested objects, and perturb object mass by \(\pm 20\%\) of the nominal value. MuJoCo computes the effective contact friction as the geometric mean of the friction coefficients of the finger and object colliding surfaces:
\begin{equation}\label{eq:mu_eff}
\begin{array}{l}
\mu_{\mathrm{eff}} = \sqrt{\mu_{\mathrm{f}} \cdot \mu_{\mathrm{o}}},
\end{array}
\end{equation}
where \(\mu_{\mathrm{f}}\) is the friction coefficient of the robotic hand's finger material (assumed to be fixed), and \(\mu_{\mathrm{o}}\) is a per-object friction coefficient sampled per friction regime. 
\smallskip

\noindent\textbf{Hardware Platform:}
Our hardware platform (\Cref{fig:hwsetup}) consists of a 6-DoF FAIR Innovation FR3 robot arm equipped with a RealHand Inc.\ L6 multi-fingered hand with piezoresistive tactile sensing. The hand provides joint position control and tactile feedback for contact detection and post-grasp evaluation. Our perception stack utilizes two RGB-D cameras with complementary viewpoints: an Orbbec Astra Pro Plus mounted laterally about 46\,cm from the workspace and a RealSense D435i mounted on the opposite side. The Astra provides the primary object pose estimate, while the D435i supplies a complementary estimate from a less occluded viewpoint before grasp execution. We place objects on a planar table within the arm's workspace.

\subsection{Baselines}\label{ssec:baselines}
\noindent We compare our approach against four baselines spanning risk-neutral and risk-aware belief-space MPC. \emph{Gaussian MPC} ({\gauss}) uses a single Gaussian belief (\(K{=}1\)) with an expected-cost objective (\(\lambda_c = 0\) in~\eqref{eq:mpcobj}). \emph{Gaussian-CVaR MPC} ({\gausscvar}) extends \gauss\ with a CVaR objective~\eqref{eq:cvar_soft} (\(\lambda_c = 0.5\)) while retaining a single Gaussian belief and location-scale reparameterization for pathwise gradients. \emph{Particle Filter} ({\pf}) MPC uses a particle-filter belief (\(N{=}100\)) with sampling-based optimization and CVaR estimated via particle sorting and truncation. \emph{CEM MPC} ({\cem}) follows~\cite{pinneri_sample-efficient_2021}, using a Gaussian belief, population size 64, elite fraction 0.2, and 3 CEM iterations. Our method, \emph{Variational Neural Belief MPC} ({\vnb}), uses a reparameterizable GMM belief (\(K{=}8\)) and the full risk-aware objective~\eqref{eq:mpcobj} with \(\lambda_c = 1\).

\subsection{Multimodal Perception and Object Pose Estimation}\label{ssec:cameras}
Using the sensors in the hardware platform described in \Cref{ssec:setup}, we estimate the pose of the object to be grasped using ICP registration from the Orbbec Astra Pro Plus. The resulting ICP registration fitness and residual yield the pose covariance, \(\boldsymbol{\Sigma}_t^{\mathrm{pose}}\), which enters the MPC objective~\eqref{eq:mpcobj}. When both cameras observe the object, the two ICP hypotheses are fused using covariance intersection,
\begin{equation}\label{eq:cov_intersection}
\left(\boldsymbol{\Sigma}_t^{\mathrm{pose}}\right)^{-1}
=
\omega_{\mathrm{CI}}\boldsymbol{\Sigma}_1^{-1}
+
(1-\omega_{\mathrm{CI}})\boldsymbol{\Sigma}_2^{-1},
\quad \omega_{\mathrm{CI}}\in[0,1],
\end{equation}
with a fixed \(\omega_{\mathrm{CI}}=0.5\) in our implementation. During grasp execution and for the lift-and-hold evaluation phase, tactile observations replace vision for contact detection and tactile grasp-quality proxy evaluation (see \Cref{ssec:hwexp,apx:ehatcomp}).

\subsection{Grasp Stress Tests and Evaluation Metrics}\label{ssec:strmet}%
\noindent\textbf{Friction Regimes:}
To expose robustness differences between our approach and the baselines in \Cref{ssec:baselines}, we evaluate across three friction regimes of increasing difficulty. In the \emph{Nominal} regime, we sample object friction as \(\mu_{\mathrm{o}} \sim \mathcal{U}[0.4, 1.0]\), yielding \(\mu_{\mathrm{eff}} \in [0.37, 0.59]\). The \emph{Wide} regime spans the full uncertainty range with \(\mu_{\mathrm{o}} \sim \mathcal{U}[0.15, 1.2]\), yielding \(\mu_{\mathrm{eff}} \in [0.23, 0.65]\), exposing planners to both low- and high-friction contacts. The \emph{Bimodal} regime models multimodal contact conditions using a Gaussian mixture, \(\mu_{\mathrm{o}} \sim 0.5\,\mathcal{N}(0.18, 0.03^2) + 0.5\,\mathcal{N}(1.0, 0.05^2)\), producing a bimodal distribution over effective friction.

We select friction coefficients from Schneider \& Company's reference chart \cite{CoefficientFrictionReference} under dry-contact assumptions. The fingertip collision geometries\footnote{We initially set \(\mu_{\mathrm{f}}=0.35\) to match the PEEK fingertips of the real hand. In MuJoCo this setting produced unstable contacts for several objects, so we used the simulator default \(\mu_{\mathrm{f}}=1.0\) for controlled comparisons and varied object friction instead.} use a fixed \(\mu_{\mathrm{f}} = 1.0\). By~\eqref{eq:mu_eff}, the effective friction therefore depends on only the object's surface. \Cref{tab:muexps} summarizes the assumed object friction ranges and nominal values. We report results on two object primitives (a cube and a box) under four risk levels (\(\beta \in \{0.5, 0.9, 0.95, 0.99\}\)) and three randomization settings per condition.\footnote{The published IEEE camera-ready version states seven objects. This reflects a miscount, not a difference in what \Cref{tab:main_results} reports. Development runs additionally comprised other geometric primitives and YCB objects \cite{calli2015ycb}, but the results tables average over the two object primitives above.} For hardware experiments, we evaluate object pose perturbations following the protocol in \Cref{ssec:hwexp}. \Cref{fig:hwsetup} shows the real robotic manipulator and sensing setup.
\begin{table}[t]
\centering
\caption{Dry-contact friction coefficients (\(\mu_{\mathrm{f}}=1.0\)).}
\label{tab:muexps}
\small
\begin{tabular}{lccc}
\toprule
Object & \(\mu_{\mathrm{o}}\) Range & Nom.\ \(\mu_{\mathrm{o}}\) & \(\mu_{\mathrm{eff}}\) Range \\
\midrule
PLA primitive & 0.30--0.50 & 0.40 & 0.32--0.42 \\
LDPE bottle & 0.25--0.45 & 0.35 & 0.30--0.40 \\
Alum./steel can & 0.20--0.35 & 0.25 & 0.26--0.35 \\
Tennis ball (felt) & 0.50--0.80 & 0.65 & 0.42--0.53 \\
\bottomrule
\end{tabular}
\end{table}
\smallskip

\noindent\textbf{Simulation Stress Tests:}
In simulation, we subject each grasp to a two-phase stress test that evaluates lift stability and disturbance rejection. The robot first lifts the object vertically by 5\,cm over 1\,s to verify load-bearing capability. It then applies four lateral shear pulses of increasing magnitude (3.0, 5.0, 8.0, and 12.0\,N) to test resistance to disturbance-induced slip. We consider a grasp \emph{nominally successful} if the object remains within 1\,cm of the lifted pose and survives all shear pulses. We then evaluate grasp stability, robustness, belief convergence, and empirical failure rate:
\begin{equation}\label{eq:pert_fail}
\hat{P}_{\mathrm{fail}}^{\mathrm{emp}} = 1 - \frac{n_{\mathrm{surv}}}{n_{\mathrm{test}}},
\end{equation}
where \(n_{\mathrm{surv}}\) is the number of grasps surviving all perturbations out of \(n_{\mathrm{test}}\) total trials. We compare this with the belief-predicted failure probability~\eqref{eq:fail_mc}, \(\hat{P}_{\mathrm{fail}}^{\mathrm{bel}} \approx P(C>\tau_f)\), using a Gaussian CDF with \(\tau_f=5.8\) (dimensionless).

\noindent\textbf{Hardware Experiments:}\label{ssec:hwexp}
We evaluate grasp stability under object pose uncertainty (ICP pose estimates) and structured pose perturbations. The protocol uses a \(24\times19\) pegboard (1-inch / 25.4\,mm pitch) with 10\,mm dots rigidly fixed to the workspace, with the hand positioned near the board center so all perturbations lie within a reachable grid. For each object we define an \(M\times M\) sub-grid centered on the nominal pose, where \(M=2N_o^j+1\) (odd \(N_o^j\)) or \(M=2N_o^j\) (even \(N_o^j\)), with \(N_o^j=\lceil s/25.4\rceil\) and \(s\) the support-side length in mm. We evaluate four offsets
\begin{equation}\label{eq:hw_offs}
\begin{array}{l}
\mathcal{O}=\{(0,0),(+1,0),(0,+1),(+1,+1)\}\times5\,\text{mm},
\end{array}
\end{equation}
corresponding to the center, half-dot translations in \(+x\) and \(+y\), and a \(45^\circ\) diagonal (\(\approx7.07\)\,mm). At each offset the ICP pipeline records the object pose \(X^o\), from which we compute the error \(\Delta X=(\hat{X}^o_{\mathrm{ref}})^{-1}X^o\). The robot then lifts the object by 4\,cm over 3\,s and holds for 2\,s while the Intel D435i monitors displacement. A trial succeeds if the object remains within 2\,cm of the lifted reference during the hold phase. With three objects (two YCB and one primitive), four offsets, and two methods, \vnb\ and a Gaussian baseline (\gauss), our hardware evaluation comprises 24 trials total (12 per method). \Cref{apx:experiments} provides the ICP scoring and pegboard-placement details.
\begin{table}[t]
\centering
\small
\caption{Grasp Stress test protocol and evaluation metrics}
\label{tab:metsum}
\renewcommand{\arraystretch}{1}
\begin{tabular}{L{2cm} p{5cm}}
\toprule
\toprule
Metric Category & Definition \\
\midrule
\textbf{Nominal Stress Test (Sim.)}
& Vertical lift of 5\,cm over 1\,s, followed by lateral shear pulses of 3, 5, 8, and 12\,N. Success requires displacement \(< 1\)\,cm from the lifted pose. \\
\midrule
\textbf{Uncertain Friction Stress Test (Sim.)}
& 28 perturbations: lateral impulses (3, 5, 8, 12\,N; 4 directions; 0.15\,s), torque impulses (0.3--1.0\,Nm; 3 axes; 0.2\,s), and friction drops \(\mu \in \{0.05, 0.10, 0.15\}\). \\
\midrule
\textbf{Per-Episode Stability}
& Lift height \(h_{\mathrm{lift}}\), time-to-slip \(t_{\mathrm{slip}}\), peak slip \(d_{\mathrm{slip}}^{\max}\), and failure mode (\texttt{grasp}, \texttt{lift}, \texttt{slip}, \texttt{perturbation}, \texttt{none}). \\
\midrule
\textbf{Robust Success Criterion}
& Nominal success and perturbation survival rate \(> 50\%\), where survival rate is the fraction of perturbations completed without exceeding displacement or rotation thresholds. \\
\midrule
\textbf{Hardware Evaluation}
& Approximate \(\varepsilon\) quality at grasp acquisition (\(\hat{\varepsilon}\)), mean slip proxy (\(\bar{s}\)). \\
\bottomrule
\bottomrule
\end{tabular}
\end{table}

\begin{table*}[t]
\centering
\caption{Aggregate performance across friction regimes (mean over objects, \(\beta\) values, and seeds) in simulation. Robust = nominally successful \& perturbation survival \(\geq 50\%\). PertSurv = perturbation survival rate. \(\varepsilon\) = mean Ferrari--Canny quality metric~\cite{ferrari1992planning} at episode termination. Quality = mean fraction of episodes achieving positive force closure (\(\varepsilon>0\)) at termination.
\(\hat{P}_{\mathrm{fail}}^{\mathrm{bel}}\) = belief-predicted failure probability. \(\hat{P}_{\mathrm{fail}}^{\mathrm{emp}}\) = empirical failure probability~\eqref{eq:pert_fail}. Best per regime in bold; \(|\Delta\hat{P}|\) = calibration error \(|\hat{P}_{\mathrm{fail}}^{\mathrm{bel}} - \hat{P}_{\mathrm{fail}}^{\mathrm{emp}}|\).}
\label{tab:main_results}
\small
\renewcommand{\arraystretch}{0.85}
\setlength{\tabcolsep}{3.5pt}
\begin{tabular}{@{}l l c c c c c c c c c@{}}
\toprule
\textbf{Method} & \textbf{Regime} & \textbf{SR} (\%) & \textbf{Robust\,\%} & \textbf{PertSurv\,\%} & \(\varepsilon\uparrow\) & \textbf{Quality} \(\uparrow\) & \(\hat{P}_{\mathrm{fail}}^{\mathrm{bel}}\downarrow\) & \(\hat{P}_{\mathrm{fail}}^{\mathrm{emp}}\downarrow\) & \(|\Delta\hat{P}|\) & \textbf{Time}\,(s) \\
\midrule
\multirow{3}{*}{\small\pf}
  & nominal      & 83  & 67 & 57 & 0.0058 & 0.86 & {--} & 0.43 & {--} & 49.6 \\
  & wide         & 44  & 39 & 31 & 0.0039 & 0.85 & {--} & 0.69 & {--} & 32.6 \\
  & bimodal      & 56  & 28 & 29 & 0.0069 & 0.87 & {--} & 0.71 & {--} & 41.7 \\
\midrule
\multirow{3}{*}{\small\gauss}
  & nominal      & \textbf{100} & \textbf{75} & \textbf{69} & 0.0104 & 0.90 & 0.22 & \textbf{0.31} & 0.09 & \textbf{3.2} \\
  & wide         & 71  & 63 & 56 & 0.0100 & 0.90 & 0.31 & 0.44 & 0.13 & 3.4 \\
  & bimodal      & 54  & 42 & 36 & 0.0111 & 0.90 & 0.57 & 0.64 & \textbf{0.07} & 4.3 \\
\midrule
\multirow{3}{*}{\small\gausscvar}
  & nominal      & \textbf{100} & \textbf{75} & \textbf{69} & 0.0104 & 0.90 & 0.22 & \textbf{0.31} & 0.09 & 3.1 \\
  & wide         & 71  & 63 & 56 & 0.0100 & 0.90 & 0.31 & 0.44 & 0.13 & 3.4 \\
  & bimodal      & 54  & 42 & 36 & 0.0111 & 0.90 & 0.57 & 0.64 & \textbf{0.07} & 4.3 \\
\midrule
\multirow{3}{*}{\small\cem}
  & nominal      & \textbf{100} & 63 & 55 & 0.0101 & 0.90 & 1.00 & 0.45 & 0.55 & 5.5 \\
  & wide         & \textbf{79}  & \textbf{67} & \textbf{59} & 0.0109 & 0.90 & 1.00 & \textbf{0.42} & 0.58 & 5.4 \\
  & bimodal      & 42  & 33 & 30 & 0.0119 & \textbf{0.91} & 1.00 & 0.70 & 0.29 & 5.1 \\
\midrule
\addlinespace[2pt]
\multirow{3}{*}{\textbf{Ours (\vnb)}}
  & nominal       & \textbf{100}  & \textbf{79}   & \textbf{73} & \textbf{0.0105} & 0.90 & \textbf{0.18}   & \textbf{0.27} & \textbf{0.09} & 8.5 \\
  & wide          & \textbf{79}   & \textbf{67}   & 58 & \textbf{0.0106} & 0.90 & \textbf{0.28}   & \textbf{0.42} & \textbf{0.14} & 7.1 \\
  & bimodal       & \textbf{54}   & \textbf{38}   & \textbf{35} & 0.0106 & 0.89 & \textbf{0.54}   & \textbf{0.65} & \textbf{0.11} & 6.8 \\
\bottomrule
\end{tabular}
\end{table*}

\section{Results \& Discussion}\label{sec:resdis}
\begin{figure}[t]
    \centering
    \includegraphics[width=\linewidth, trim=0pt 10pt 0pt 0pt, clip]{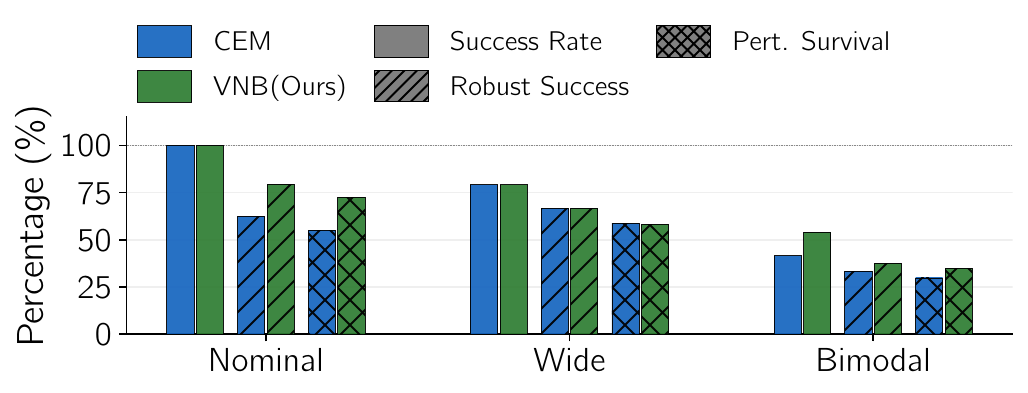}
    \caption{\textbf{Per-Regime Performance Comparison.} Grouped bar chart comparing {\small\cem} and {\small\vnb} across friction regimes. \vnb~matches or exceeds {\small\cem} in all three regimes, with the largest gains under nominal and bimodal friction.}
    \label{fig:regime_bars}
\end{figure}%

\subsection{Simulation Results}
\noindent\Cref{tab:main_results} compares all five methods across friction regimes, averaged over objects, risk levels, and seeds (time reports wall-clock seconds per episode). While nominal success is high for several methods, the perturbation protocol separates policies by whether nominal grasps remain stable under uncertain friction and external loading. We therefore emphasize success rate (SR), robust success, perturbation survival, and calibration error rather than raw worst-tail quality: once a grasp fails, terminal quality collapses to zero and no longer distinguishes whether a method failed often or rarely. \Cref{fig:regime_bars} and \Cref{fig:epstraj} show per-regime stress-test performance and grasp-quality evolution against \cem. \gauss\ and \gausscvar\ produce identical aggregate results across all regimes in our benchmark: the single-component Gaussian belief (\(K=1\)) cannot represent distinct contact hypotheses, and the CVaR weighting did not change the selected actions after multi-start optimization.

\begin{figure}[t]
    \centering
    \includegraphics[width=\linewidth]{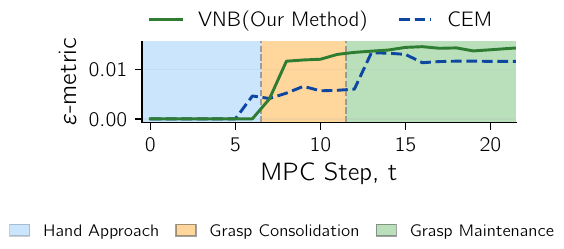}
    \caption{\textbf{Grasp Quality under Force Perturbations.} Aggregate Ferrari--Canny \(\varepsilon\) over MPC steps for {\small\vnb} and {\small\cem}. Shaded regions mark approach, consolidation, and grasp maintenance. {\small\vnb} maintains higher \(\varepsilon\) and remains robust under lift perturbations (step 11), whereas expectation-driven {\small\cem} degrades during grasp maintenance.}
    \label{fig:epstraj}
\end{figure}

\begin{figure}[htb]
    \centering
    \setlength{\tabcolsep}{0pt}
    \renewcommand{\arraystretch}{0.8}
    \setlength{\fboxsep}{0pt}
    \setlength{\fboxrule}{1pt}

    \begin{tabular}{@{}cc ccc@{}}
    & \small\textbf{Pre-Grasp} & \small\textbf{Grasp} & \small\textbf{Lift (Mustard)}  &  \small\textbf{Lift (Box)}  \\[2pt]

    \rotatebox{90}{\small\;\textbf{\cem}} &
    \fbox{\includegraphics[width=0.23\linewidth, trim=970pt 700pt 730pt 130pt, clip]{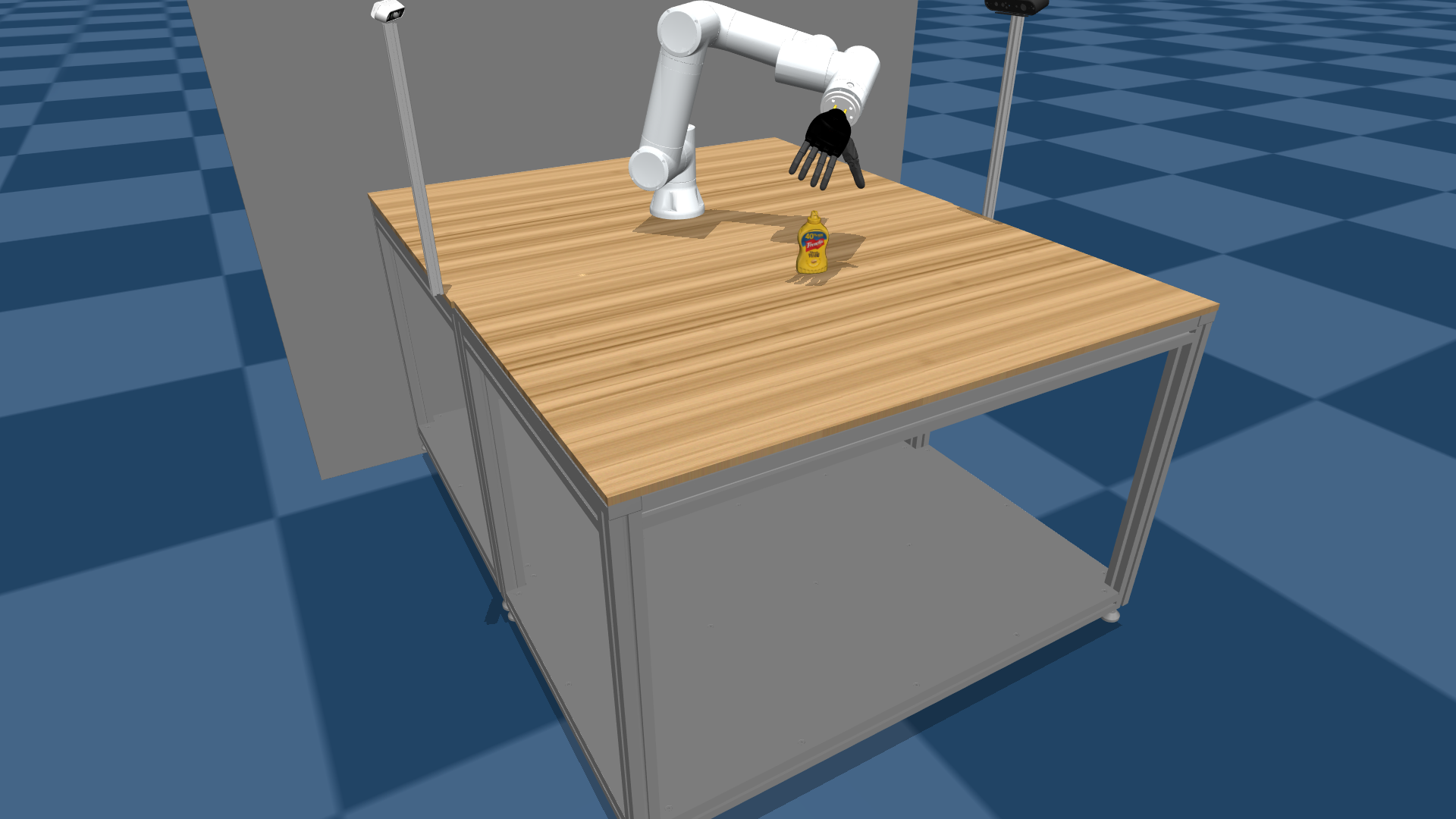}} &
    \fbox{\includegraphics[width=0.23\linewidth, trim=970pt 700pt 730pt 130pt, clip]{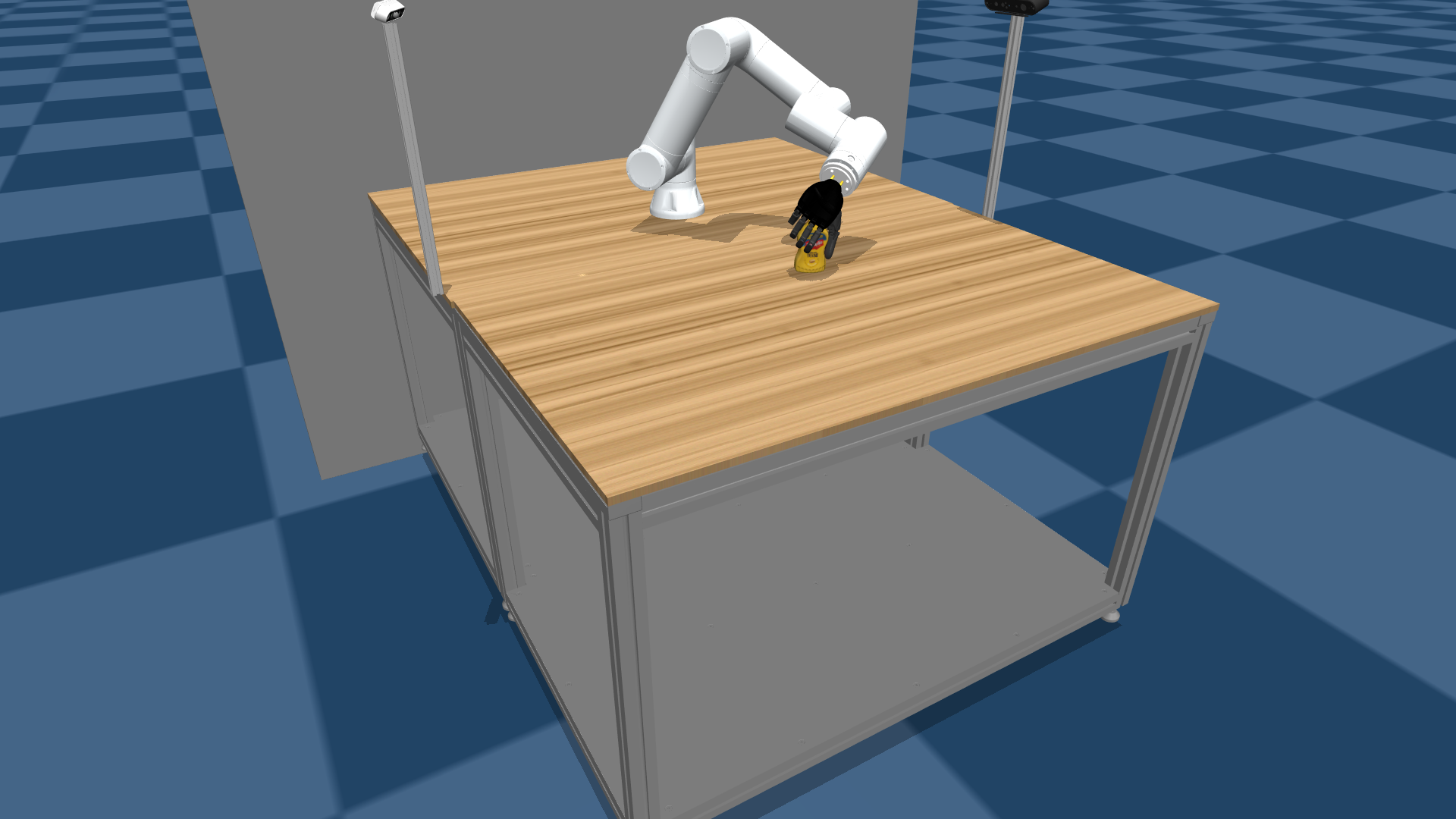}} &
    \fbox{\includegraphics[width=0.23\linewidth, trim=970pt 700pt 730pt 130pt, clip]{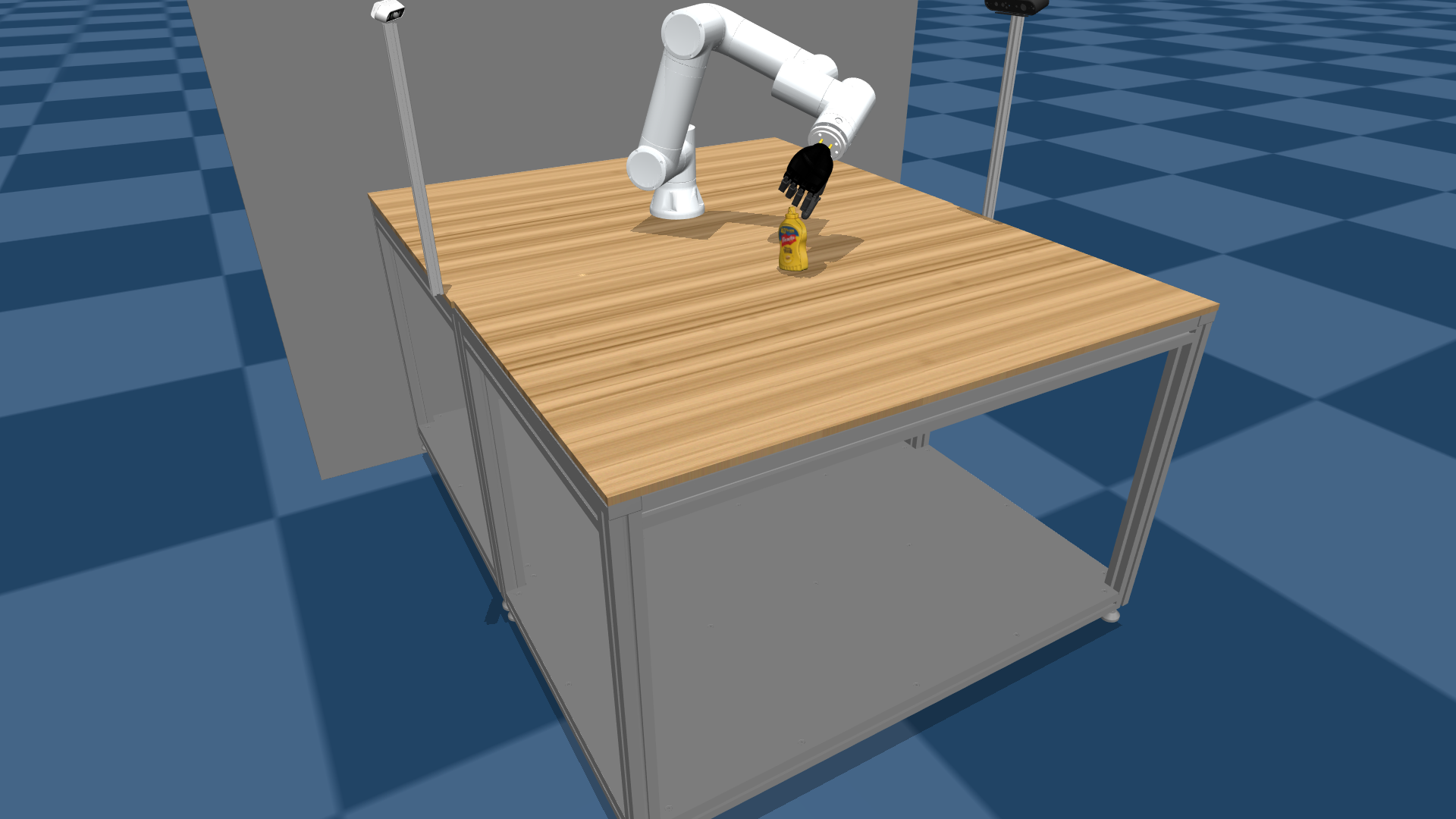}} & \fbox{\includegraphics[width=0.20\linewidth, trim=970pt 700pt 760pt 130pt, clip]{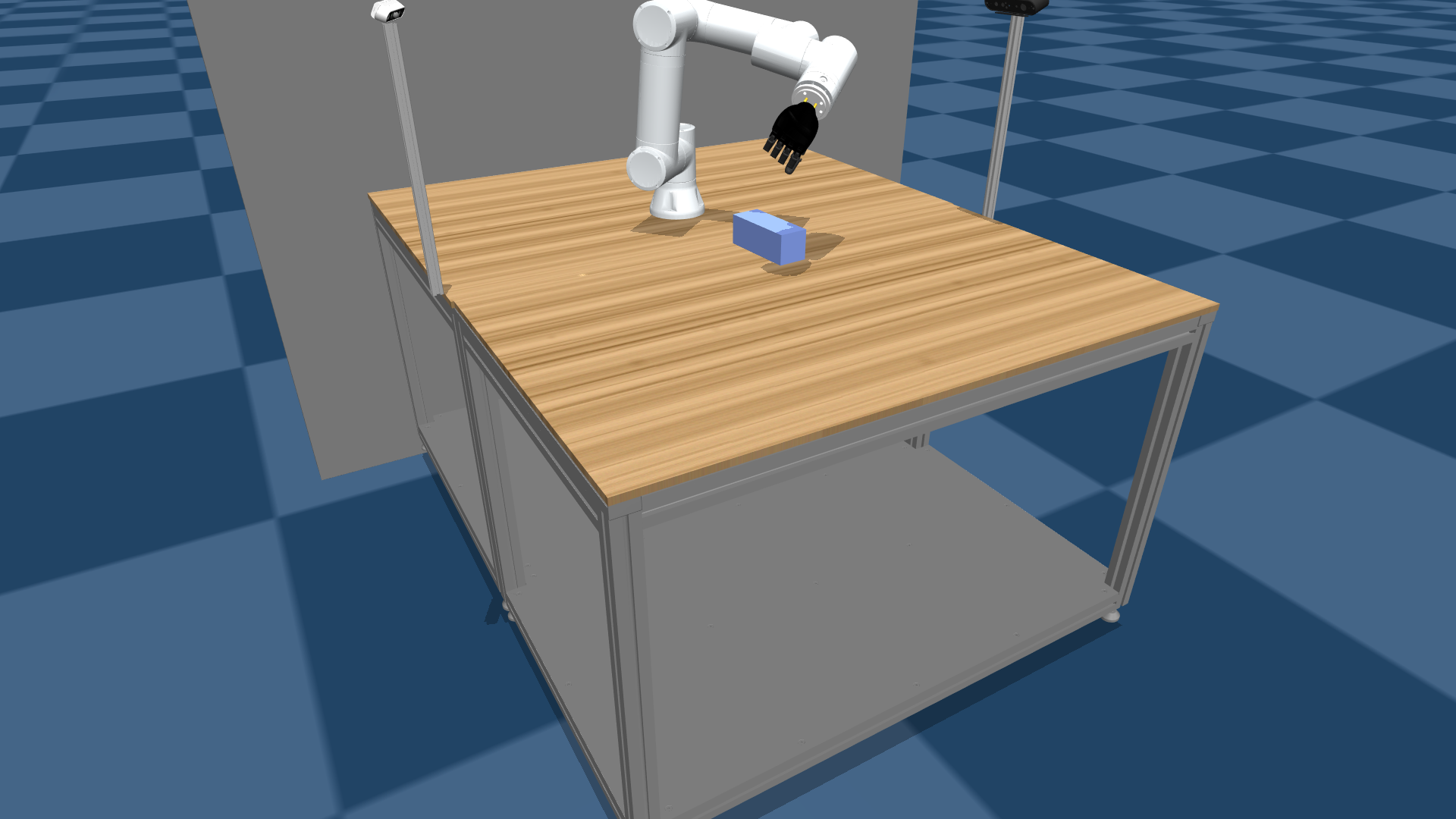}} \\[-1pt]

    & \scriptsize \(\varepsilon{<}0\) &
      \scriptsize \(\varepsilon{=}0.001\) &
      \scriptsize \textcolor{red}{\(\varepsilon{=}0.001\)\;(failure)} & \scriptsize \textcolor{red}{\(\varepsilon{=}0.0022\)\;(failure)}\\[4pt]

    \rotatebox{90}{\small\;\textbf{\vnb}} &
    \fbox{\includegraphics[width=0.23\linewidth, trim=970pt 700pt 730pt 130pt, clip]{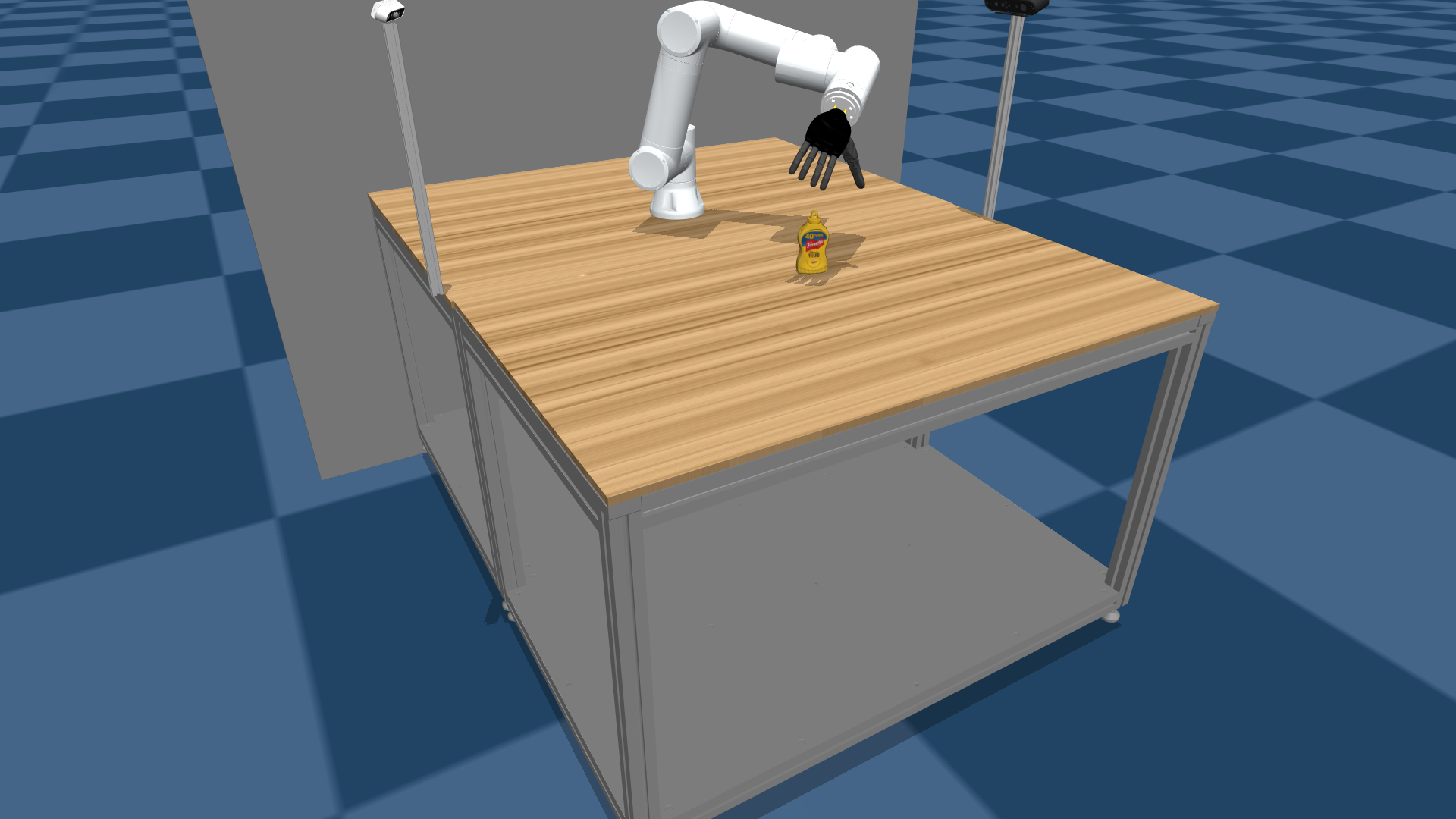}} &
    \fbox{\includegraphics[width=0.23\linewidth, trim=970pt 700pt 730pt 130pt, clip]{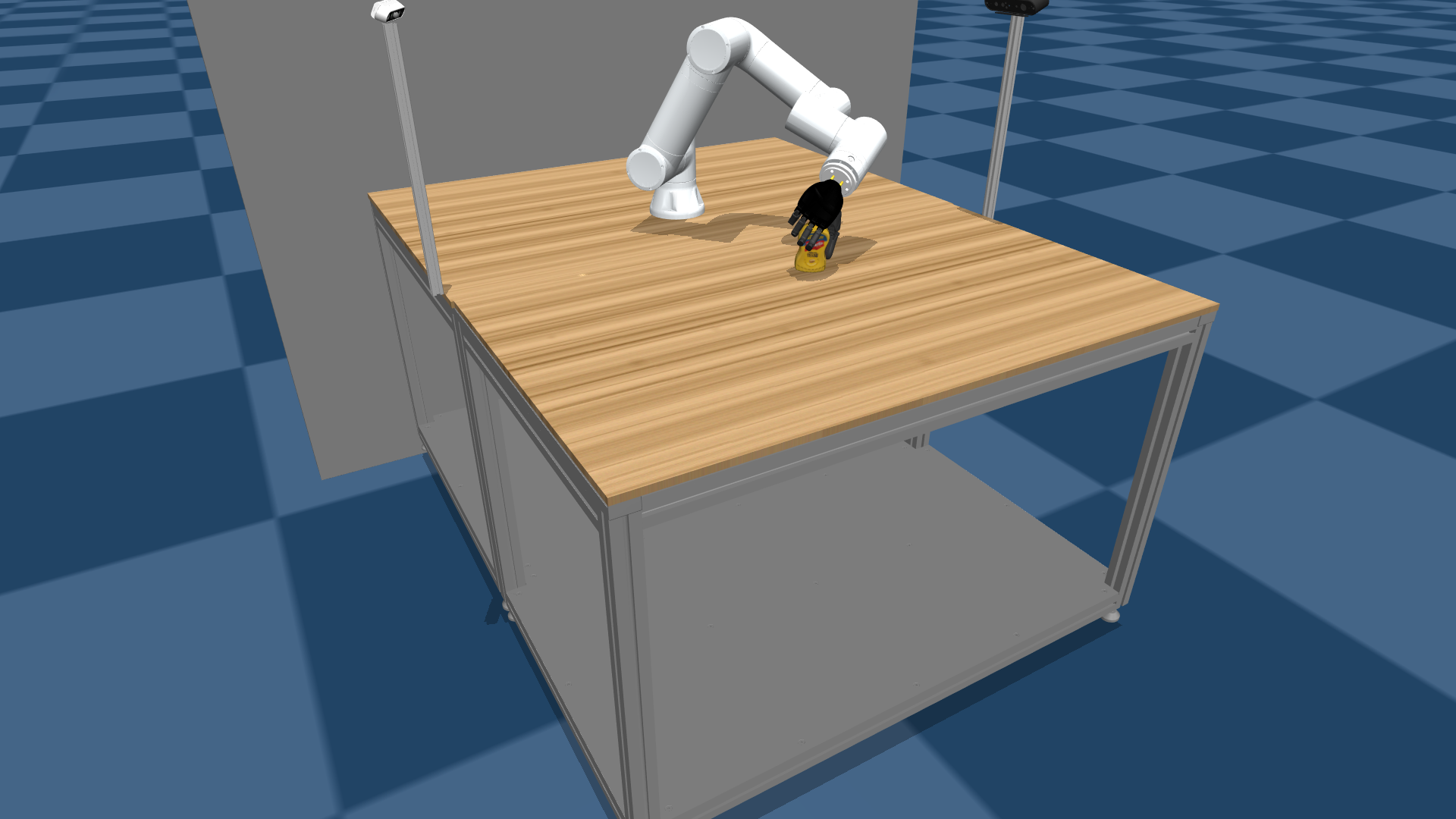}} &
    \fbox{\includegraphics[width=0.23\linewidth, trim=970pt 700pt 730pt 130pt, clip]{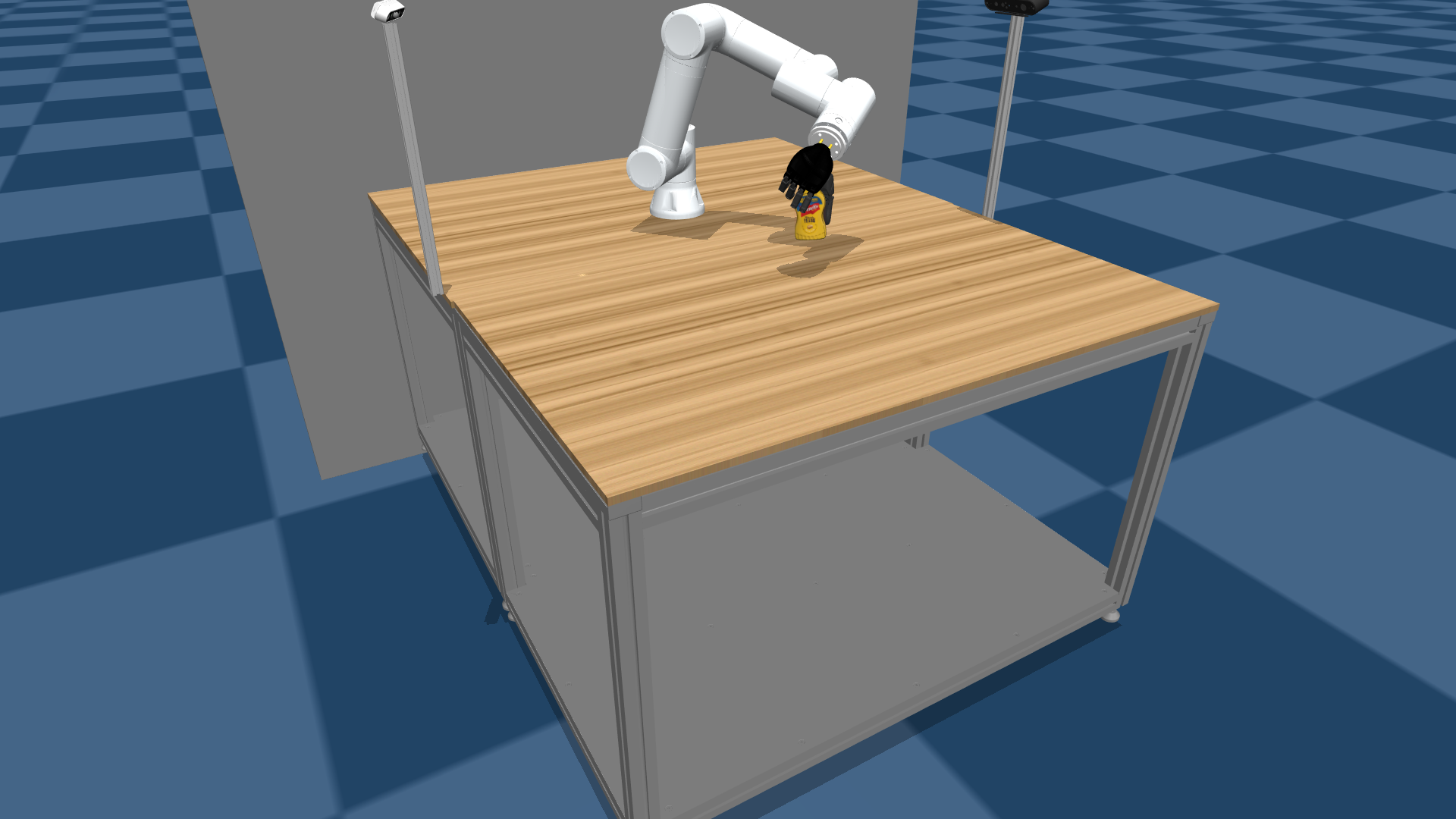}} & \fbox{\includegraphics[width=0.20\linewidth, trim=970pt 700pt 760pt 130pt, clip]{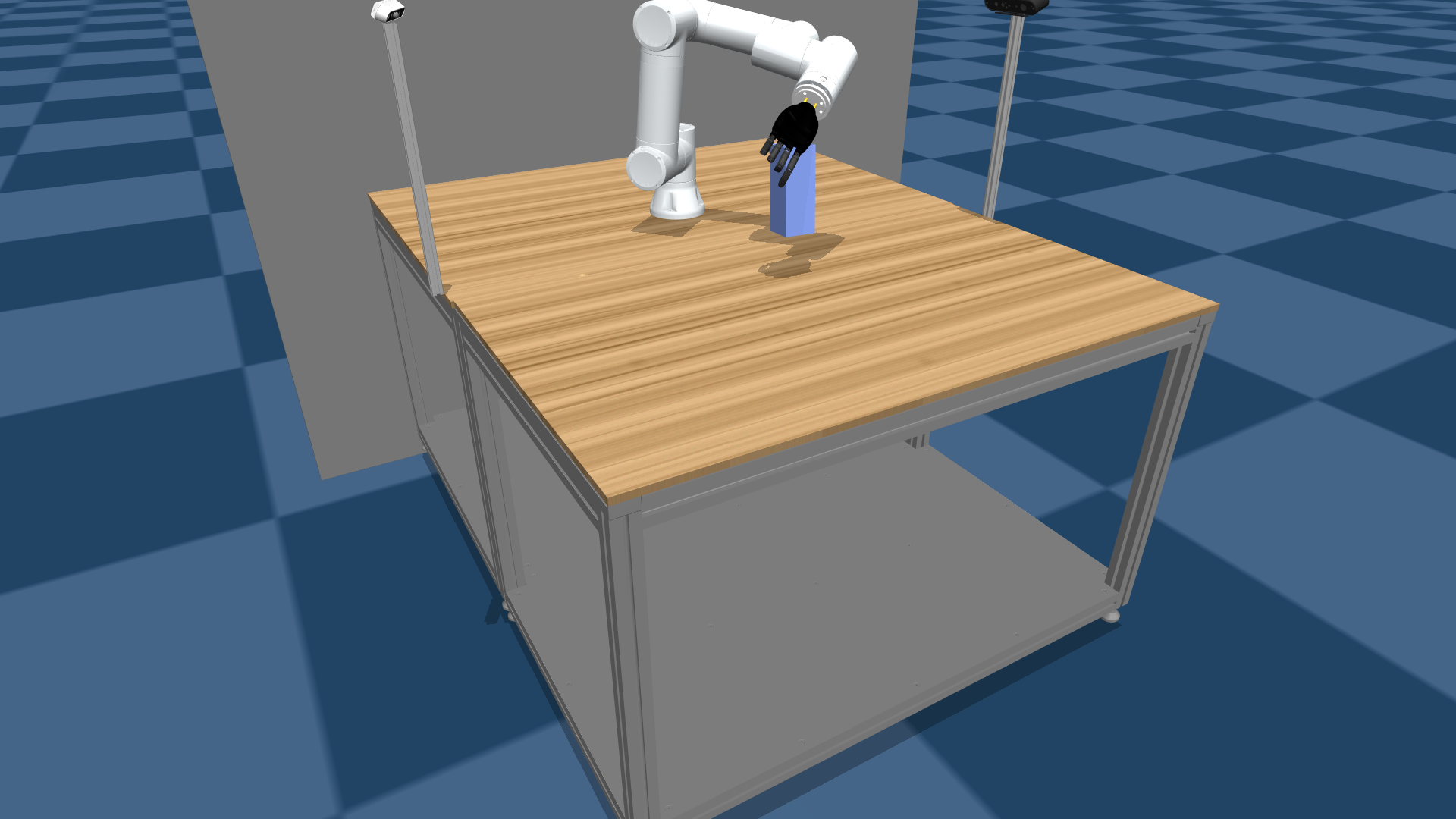}} \\[-1pt]

    & \scriptsize \(\varepsilon{<}0\) &
      \scriptsize \(\varepsilon{=}0.03\) &
      \scriptsize {\color{blue!85!black}{\(\varepsilon{=}0.15\)\;(success)}} &
      \scriptsize {\color{blue!85!black}{\(\varepsilon{=}0.0021\)\;(success)}} \\
    \end{tabular}

    \caption{\textbf{Grasp Comparison on Representative Objects.} \emph{Top}: Under friction uncertainty, {\small\cem} fails to achieve force closure (\(\varepsilon{\approx}0\), \(0.0022\)) and drops the object during lift. \emph{Bottom}: Our risk-aware {\small\vnb} achieves robust grasps (\(\varepsilon{=}0.15\), \(0.0021\)) and sustains a 50\,mm lift.}
    \label{fig:lifttest}
\end{figure}%

\subsection{Hardware Results}%
\noindent\Cref{tab:hw_results} summarizes our hardware evaluation on three YCB objects (box, mustard bottle, soup can; see \Cref{fig:hwsetup,fig:hwgrasps}) under pose perturbations across 4 offsets per object (\(n=12\) trials total). Both methods achieve 100\% success on this small hardware set, so the hardware trials should be interpreted as an execution and convergence check rather than a statistically powered success-rate comparison. Within this setting, \vnb\ uses fewer steps and less runtime, terminating through force-threshold or tactile-quality stopping conditions across all trials. \gauss\ has a heavier upper tail in step count, with an interquartile range (IQR) extending to 11 steps, indicating less consistent convergence. The median step count is 6 for \vnb\ and 7 for \gauss; the median runtime is 11.5\,s for \vnb\ and 14.2\,s for \gauss.

\vnb\ also has lower relative dispersion: CoV is 0.32 vs.\ 0.49 for step count and 0.41 vs.\ 0.78 for runtime. Terminal grasp quality~\(\hat{\varepsilon}_T\) is higher for \vnb\ (median \(1.6\times10^{-3}\) vs.\ \(0.9\times10^{-3}\)), with lower variance (CoV 0.48 vs.\ 1.21). The wider IQR of \gauss\ (\(0.1\)--\(2.2\)) reflects less consistent terminal tactile quality and suggests occasional lower-quality terminations. Both methods achieve a median of three final contacts. On the mustard bottle, peak and mean slip are comparable across methods (63.2 vs.\ 64.1 and 3.2 vs.\ 3.3, respectively), while \gauss\ has tighter slip dispersion on this object (CoV 0.13 vs.\ 0.42). The small sample size (\(n=12\) per method) limits statistical power; we report medians and IQRs rather than means and standard deviations.%
\begin{figure}[htb]
\centering
\setlength{\tabcolsep}{2pt}
\renewcommand{\arraystretch}{0.85}
\setlength{\fboxsep}{0pt}
\setlength{\fboxrule}{1pt}

\newlength{\hwimgw}
\setlength{\hwimgw}{0.29\linewidth}

\begin{tabular}{@{}cccc@{}}
\small\textbf{Mustard Bottle} & \small\textbf{Box} & \small\textbf{Soup Can} \\[3pt]
\fbox{\includegraphics[width=\hwimgw,height=\hwimgw,keepaspectratio,
    trim=300pt 20pt 250pt 500pt,clip]
    {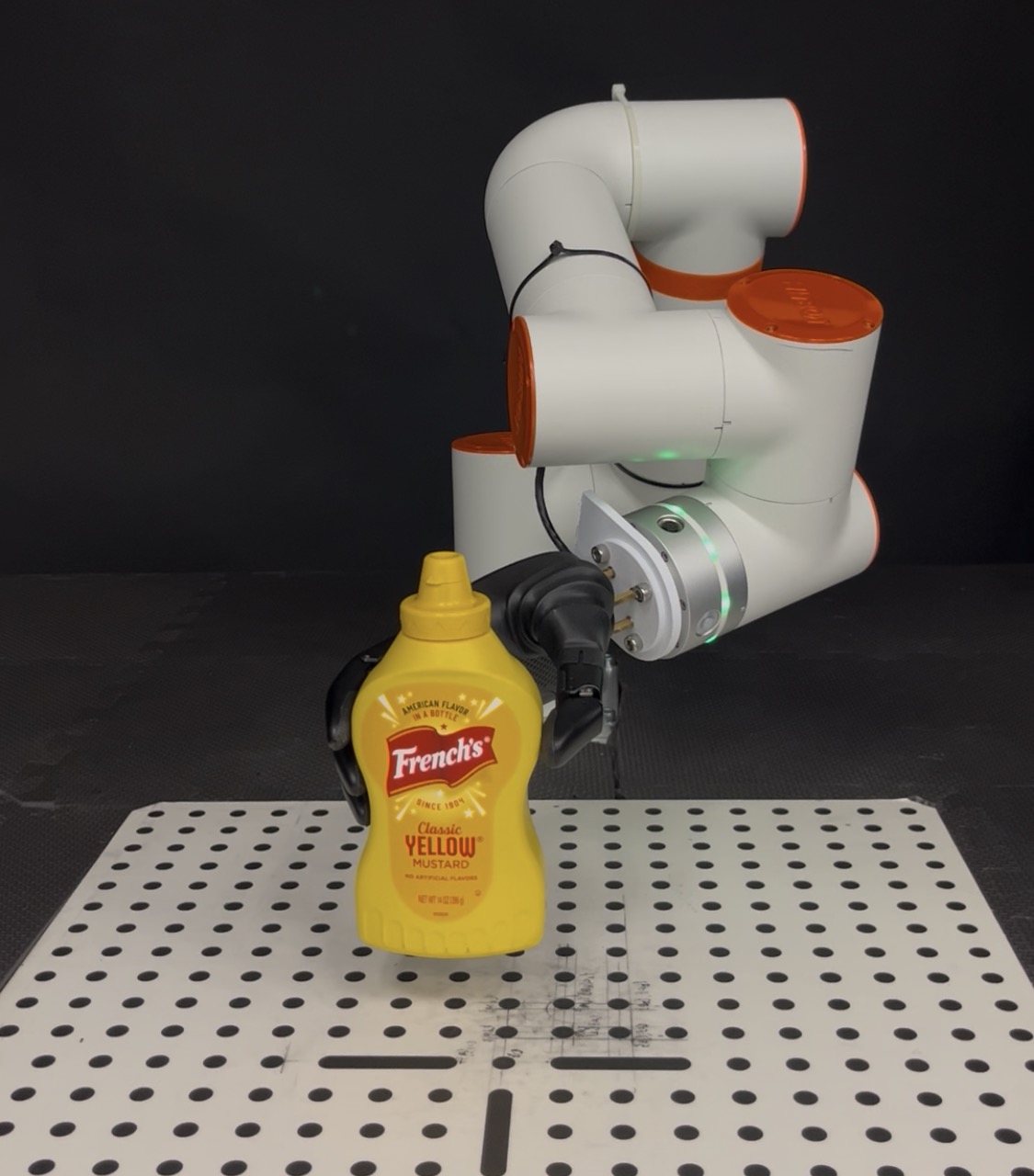}} &
\fbox{\includegraphics[width=\hwimgw,height=\hwimgw,keepaspectratio,
    trim=300pt 20pt 200pt 300pt,clip]
    {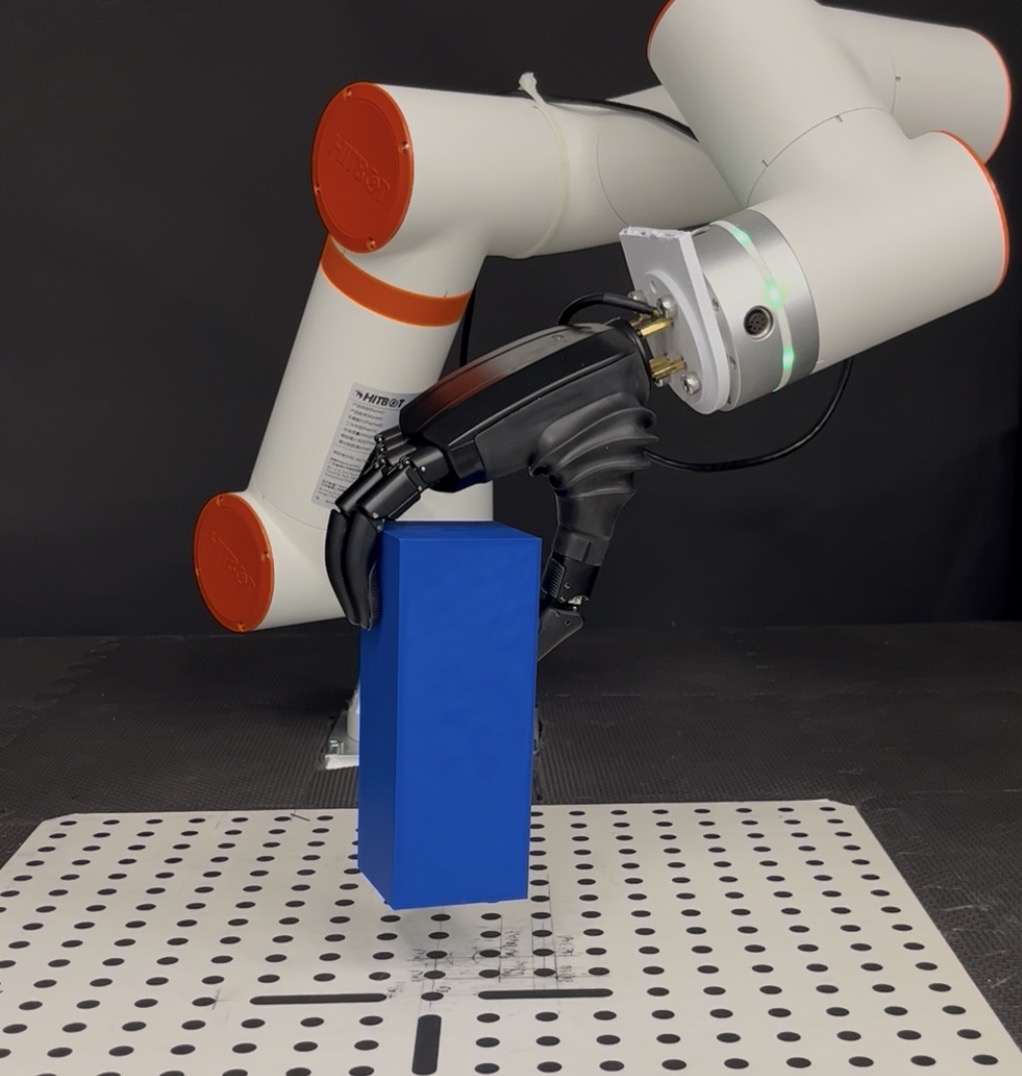}} &
\fbox{\includegraphics[width=\hwimgw,height=\hwimgw,keepaspectratio,
    trim=100pt 20pt 200pt 300pt,clip]
    {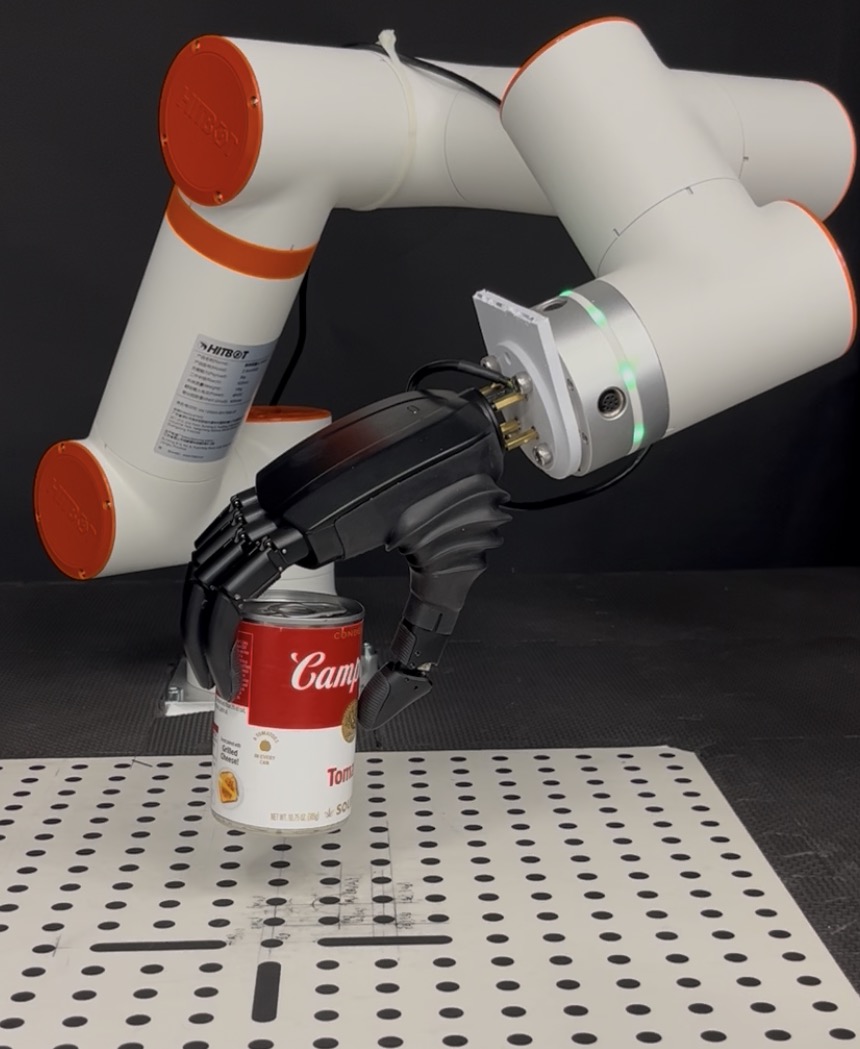}} \\[-2pt]
{\small\(\hat{\varepsilon}_T = 1.539\)} &
{\small\(\hat{\varepsilon}_T = 1.637\)} &
{\small\(\hat{\varepsilon}_T = 3.066\)} \\
\end{tabular}

\caption{\textbf{Hardware Grasp Results under Vision Uncertainty.} Post-lift frames of representative \vnb{} grasps executed on our hardware platform after 6D pose estimation (RealSense D435i). Here, we quantify terminal grasp quality using an \(\varepsilon\)-metric \cite{ferrari1992planning} inspired proxy
\(\hat{\varepsilon}_T\) (\(\times 10^{-3}\)) estimated from piezoresistive tactile observations (see \Cref{apx:ehatcomp} for details).}
\label{fig:hwgrasps}
\end{figure}

\subsection{Summary}%
\noindent\vspace{-0.1em}
Together, our simulation and hardware results demonstrate that continuous, reparameterizable beliefs close the gradient gap between the planning objective and the belief state that particle filters leave open. The resulting pathwise gradients through the smooth CVaR surrogate improve robust success, perturbation survival, and calibration while retaining substantially lower wall-clock cost than particle-filter MPC. Our proposed composite objective~\eqref{eq:mpcobj} integrates contact risk, visual perception cost, and information-seeking entropy, enabling the grasp planner to hedge against multimodal uncertainty while avoiding hand configurations that occlude the object. Furthermore, our perturbation protocol across simulation and hardware experiments provides an additional evaluation mechanism beyond aggregate success rates: by comparing the belief-predicted failure probability~\eqref{eq:fail_mc} against the empirical perturbation failure rate~\eqref{eq:pert_fail}, we can assess belief calibration (see the \(\Delta \hat{P}\) column of \Cref{tab:main_results}). A well-calibrated belief should satisfy \(\hat{P}_{\mathrm{fail}}^{\mathrm{bel}} \approx \hat{P}_{\mathrm{fail}}^{\mathrm{emp}}\), whereas overconfident beliefs underestimate failure mass and yield fragile grasps despite nominal success. Our results therefore indicate our method's advantage in producing better-calibrated uncertainty estimates: across all regimes, \vnb\ consistently achieves small calibration errors (\(|\Delta\hat{P}| \le 0.14\)) while maintaining high success and robustness. In contrast, the sampling-based planner baseline (\cem) miscalibrates risk, predicting near-certain failure (\(\hat{P}_{\mathrm{fail}}^{\mathrm{bel}} \approx 1\)) even when empirical failure rates are moderate, indicating beliefs that fail to reflect actual grasp stability.
\begin{table}[!t]
\centering
\setlength{\abovecaptionskip}{2pt}
\setlength{\belowcaptionskip}{2pt}
\caption{Hardware evaluation over 12 trials per method and three objects. Both methods achieve 100\% success on this small set; \vnb\ converges faster and with lower dispersion. Continuous metrics are median [IQR]; CoV is \(\sigma/\mu\).}
\label{tab:hw_results}
\scriptsize
\setlength{\tabcolsep}{2pt}
\renewcommand{\arraystretch}{0.76}
\begin{tabular}{@{}l cc cc@{}}
\toprule
& \multicolumn{2}{c}{\textbf{\vnb}} & \multicolumn{2}{c}{\textbf{\gauss}} \\
\cmidrule(lr){2-3}\cmidrule(lr){4-5}
\textbf{Metric}
  & Med. [IQR] & CoV \(\downarrow\)
  & Med. [IQR] & CoV \(\downarrow\) \\
\midrule
Steps
  & \textbf{6 {[}6--7{]}}   & \textbf{0.32}
  & 7 {[}6--11{]}           & 0.49 \\
Time (s)
  & \textbf{11.5 {[}7.5--14.5{]}} & \textbf{0.41}
  & 14.2 {[}10.3--15.8{]}         & 0.78 \\
\(\hat{\varepsilon}_T\) (\(\times10^{-3}\))
  & \textbf{1.6 {[}1.5--2.1{]}} & \textbf{0.48}
  & 0.9 {[}0.1--2.2{]}          & 1.21 \\
Contacts & \multicolumn{2}{c}{\textbf{3}} & \multicolumn{2}{c}{3} \\
Max Slip\footnotemark[3]
  & \textbf{63.2 {[}51.9--64.9{]}} & {0.42}
  & 64.1 {[}60.1--64.3{]}          & \textbf{0.13} \\
Mean Slip\footnotemark[3]
  & \textbf{3.2 {[}2.6--3.3{]}} & {0.41}
  & 3.3 {[}3.1--3.3{]}          & \textbf{0.13} \\
\bottomrule
\end{tabular}
\\[2pt]
{\scriptsize \textsuperscript{3}Mustard bottle only. Slip values were
comparable across other objects.}
\vspace{-6pt}
\end{table}

\section{Conclusion}\label{sec:conc}
\nopagebreak
We presented variational neural beliefs for risk-aware dexterous grasping. Our framework represents uncertainty in contact parameters and object pose with a differentiable belief, enabling pathwise gradients of a smooth CVaR surrogate and direct gradient-based optimization of grasp strategies. We combine contact risk, perception uncertainty, failure probability, and belief entropy in a composite objective that improves robustness while reducing uncertainty, and we use neural belief dynamics to construct a fully differentiable belief-space planner. Across three friction regimes and a 28-test perturbation protocol, our method achieves higher robust success than sampling-based baselines such as CEM while planning 5--7\(\times\) faster than particle-filter MPC in Table~\ref{tab:main_results}. Future work will replace Langevin sampling with normalizing flows for faster reparameterized sampling, extend the risk objective to spectral and entropic coherent risk measures, meta-learn belief dynamics~(\Cref{sec:bel_dyn}) across object classes to reduce calibration cost, and improve tactile contact detection for compliant, low-friction objects by adapting pressure thresholds to object compliance and surface properties.

\bibliographystyle{ieeetr}
\bibliography{references}

\appendix
\makeatletter
\renewcommand{\p@subsection}{}
\makeatother

\setlength{\textfloatsep}{6pt plus 1pt minus 2pt}
\setlength{\floatsep}{5pt plus 1pt minus 2pt}
\setlength{\intextsep}{5pt plus 1pt minus 2pt}

\subsection{Pathwise Risk Gradients via Action Optimization}\label[appendix]{apx:riskgrad}

\begin{theorem}[Reparameterized CVaR Gradient]\label{thm:cvar_grad}
Let \(b(\boldsymbol{\phi})\) be a reparameterizable belief distribution with samples
\(\boldsymbol{\theta}_i=g(\boldsymbol{\epsilon}_i,\boldsymbol{\phi})\),
\(\boldsymbol{\epsilon}_i\sim p(\boldsymbol{\epsilon})\), and let
\(C:\mathbb{R}^d\to\mathbb{R}\) be continuously differentiable with
\(\mathbb{E}[|\nabla_{\boldsymbol{\phi}}C(g(\boldsymbol{\epsilon},\boldsymbol{\phi}))|]<\infty\).
Then the gradient of the soft CVaR estimator in~\eqref{eq:cvar_soft} satisfies
\begin{equation}\label{eq:cvar_phi_grad}
\begin{array}{l}
\nabla_{\boldsymbol{\phi}}\,\widehat{\mathrm{CVaR}}_\beta
= \dfrac{1}{(1-\beta)N}
\displaystyle\sum_{i=1}^{N}
\sigma_{\kappa_\rho}\!\left(C(\boldsymbol{\theta}_i)-\hat{\eta}\right) \\[4pt]
\qquad{}\times\nabla_{\boldsymbol{\phi}}C(g(\boldsymbol{\epsilon}_i,\boldsymbol{\phi})).
\end{array}
\end{equation}
\end{theorem}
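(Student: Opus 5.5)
The estimator in~\eqref{eq:cvar_soft} is, for a fixed draw of noise \(\{\boldsymbol{\epsilon}_i\}_{i=1}^N\), a deterministic function of \(\boldsymbol{\phi}\). The plan is to differentiate it directly with the chain rule. The one delicate point is how \(\hat\eta\) depends on \(\boldsymbol{\phi}\).

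First, I write the estimator as \(F(\boldsymbol{\phi})=\hat\eta+\frac{1}{(1-\beta)N}\sum_i \mathsf{SoftPlus}(C(g(\boldsymbol{\epsilon}_i,\boldsymbol{\phi}))-\hat\eta;\kappa_\rho)\). From \(\mathsf{SoftPlus}(x;\kappa_\rho)=\kappa_\rho^{-1}\log(1+e^{\kappa_\rho x})\) one gets \(\frac{d}{dx}\mathsf{SoftPlus}(x;\kappa_\rho)=\sigma(\kappa_\rho x)=:\sigma_{\kappa_\rho}(x)\), which is smooth and bounded in \([0,1]\). Since \(C\) is \(C^1\) and \(g\) is differentiable in \(\boldsymbol{\phi}\) (for the GMM, every operation in~\eqref{eq:rsample}--\eqref{eq:gumbel_softmax} is smooth in \(\boldsymbol{\ell},\boldsymbol{\mu}_k,\log\boldsymbol{\sigma}_k\)), each summand is differentiable. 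Its gradient is \(\sigma_{\kappa_\rho}(C_i-\hat\eta)\bigl(\nabla_{\boldsymbol{\phi}}C(g(\boldsymbol{\epsilon}_i,\boldsymbol{\phi}))-\nabla_{\boldsymbol{\phi}}\hat\eta\bigr)\), with \(\nabla_{\boldsymbol{\phi}}C(g)=J_g^\top\nabla_{\boldsymbol{\theta}}C\).

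Second, I handle \(\hat\eta\). The statement's formula contains no \(\hat\eta\)-gradient term, so I treat \(\hat\eta\) as held fixed, i.e.\ under a stop-gradient, exactly as the implementation and Remark~\ref{prop:action_grad} do. Then \(\nabla_{\boldsymbol{\phi}}\hat\eta=0\) and summing the per-sample gradients gives~\eqref{eq:cvar_phi_grad} immediately.

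Third, I justify this choice through the dual~\eqref{eq:cvar_dual}, by an envelope argument. Define \(\Phi(\eta,\boldsymbol{\phi})=\eta+\frac{1}{(1-\beta)N}\sum_i\mathsf{SoftPlus}(C_i-\eta)\). If \(\hat\eta\) is the minimizer of \(\Phi\) in \(\eta\), then \(\partial_\eta\Phi=0\), so the total derivative equals the partial derivative in \(\boldsymbol{\phi}\). The minimizer exists because \(\Phi\) is strictly convex and coercive when \((1-\beta)N\) is suitably related to \(N\). The implicit \(\hat\eta\)-term vanishes whether or not \(\hat\eta\) is differentiated. With the empirical \(\beta\)-quantile in place of the exact minimizer, the discrepancy \(1-\frac{1}{(1-\beta)N}\sum_i\sigma_{\kappa_\rho}(C_i-\hat\eta)\) is the analogue of the quantile first-order condition. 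It tends to zero as \(\kappa_\rho\to\infty\) and \(N\to\infty\). The empirical quantile is also only piecewise differentiable, with kinks at ties, so this step is the main obstacle. I would state the result under the fixed-\(\hat\eta\) convention and note the envelope correction.

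Finally, the integrability hypothesis \(\mathbb{E}|\nabla_{\boldsymbol{\phi}}C(g)|<\infty\) lets me invoke dominated convergence to interchange gradient and expectation. This shows that~\eqref{eq:cvar_phi_grad} is an unbiased pathwise estimator of the gradient of the expected soft objective at fixed \(\eta\), since \(\sigma_{\kappa_\rho}\le1\) dominates the weights.
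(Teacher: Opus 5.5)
Your proof is correct and is essentially the paper's argument. It applies the chain rule through the reparameterization \(g\) and through the softplus, whose derivative is \(\sigma_{\kappa_\rho}\), and it removes the \(\hat\eta\)-contribution by holding \(\hat\eta\) fixed and justifying this via the envelope theorem, exactly as in the paper's proof and the remark after it. Your caveat is more careful than the paper, which simply asserts that the \(\hat\eta\)-contribution vanishes at the optimal quantile: the empirical \(\beta\)-quantile is not the exact minimizer of the softplus-smoothed dual, so differentiating through \(\hat\eta\) would leave a term proportional to \(1-\frac{1}{(1-\beta)N}\sum_i\sigma_{\kappa_\rho}(C_i-\hat\eta)\). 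Two minor points: the minimizer of \(\Phi\) exists for every \(\beta\in(0,1)\) with no condition on \(N\); and your closing dominated-convergence remark would need a locally uniform integrable bound on \(\nabla_{\boldsymbol{\phi}}C(g(\boldsymbol{\epsilon},\cdot))\), not just the pointwise hypothesis, though the finite-sample identity itself needs neither.
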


\begin{proof}
By the reparameterization trick, each sample \(\boldsymbol{\theta}_i=g(\boldsymbol{\epsilon}_i,\boldsymbol{\phi})\)
is a deterministic differentiable function of \(\boldsymbol{\phi}\) for fixed \(\boldsymbol{\epsilon}_i\). The softplus \(\mathsf{SoftPlus}(x;\kappa_\rho)=\kappa_\rho^{-1}\log(1+\exp(\kappa_\rho x))\) is smooth with derivative \(\sigma_{\kappa_\rho}(x)=(1+\exp(-\kappa_\rho x))^{-1}\). Since \(C\) and \(g\) are differentiable by assumption, applying the chain rule gives
\begin{equation}\label{eq:cvar_phi_chain}
\begin{aligned}
&
\nabla_{\boldsymbol{\phi}}\!\left[
\hat{\eta}+
\frac{1}{(1-\beta)N}\sum_{i=1}^{N}
\mathsf{SoftPlus}\!\left(C(g(\boldsymbol{\epsilon}_i,\boldsymbol{\phi}))-\hat{\eta};\kappa_\rho\right)
\right] \\
&\equiv
\frac{1}{(1-\beta)N}\sum_{i=1}^{N}
\sigma_{\kappa_\rho}\!\left(C(\boldsymbol{\theta}_i)-\hat{\eta}\right)
\nabla_{\boldsymbol{\theta}}C(\boldsymbol{\theta}_i)
\nabla_{\boldsymbol{\phi}}g(\boldsymbol{\epsilon}_i,\boldsymbol{\phi}).
\end{aligned}
\end{equation}
The gradient contribution through \(\hat{\eta}\) vanishes at the optimal quantile by the envelope theorem~\cite{dingEnvelopeQuantileRegression2019,lafranceEnvelopeTheoremDynamic1991}, which yields~\eqref{eq:cvar_phi_grad}.
\end{proof}

\begin{remark}
The bias of the softplus approximation relative to the exact hinge is bounded
by \(\kappa_\rho^{-1}\log 2\), which vanishes as \(\kappa_\rho\to\infty\). We
treat \(\hat{\eta}\) as fixed with respect to \(\boldsymbol{\phi}\) in the
stochastic estimator; alternatively, optimizing jointly over \(\eta\) recovers
the envelope-theorem gradient~\cite{rockafellar2000optimization}.
\end{remark}

\subsection{Implicit Belief Variant and Hyperparameters}\label[appendix]{apx:belief_nets}
When the posterior is highly multimodal or shaped by complex contact geometry,
a Gaussian mixture may require an impractically large \(K\). As an alternative
high-capacity representation, we parameterize the unnormalized log-density
directly with a sinusoidal representation network (SIREN)~\cite{sitzmann2020implicit}:
\begin{equation}\label{eq:sirennet}
p(\boldsymbol{\theta}\mid\boldsymbol{\omega})
=
\frac{\exp\!\bigl(f_{\boldsymbol{\omega}}(\boldsymbol{\theta})\bigr)}
{Z(\boldsymbol{\omega})},
\end{equation}
where \(f_{\boldsymbol{\omega}}\) uses sinusoidal activations
\(\sigma(x)=\sin(\omega_0 x)\), and \(Z(\boldsymbol{\omega})\) is the partition function. Sampling is performed via unadjusted Langevin dynamics with step size \(\alpha\):
\begin{equation}\label{eq:langevin_update}
\boldsymbol{\theta}_{j+1}
= \boldsymbol{\theta}_{j}
+ \frac{\alpha}{2}\nabla_{\boldsymbol{\theta}}\!\log p(\boldsymbol{\theta}_{j}\mid\boldsymbol{\omega}) +
\sqrt{\alpha}\,\boldsymbol{\epsilon}_{j}, \quad
\boldsymbol{\epsilon}_{j}\sim\mathcal{N}(\mathbf{0},\mathbf{I}).
\end{equation}
Because the score \(\nabla_{\boldsymbol{\theta}}\log p\) is computed by automatic differentiation through \(f_{\boldsymbol{\omega}}\), the resulting samples are approximately differentiable with respect to \(\boldsymbol{\omega}\). Under standard smoothness assumptions on \(f_{\boldsymbol{\omega}}\), with Lipschitz gradient constant \(L\), the total-variation distance (\(\mathrm{TV}(\cdot, \star)\)) between the Langevin chain distribution \(p_J\) and the target distribution \(p^\star\)
satisfies~\cite{xuGlobalConvergenceLangevin2018}
\begin{equation}\label{eq:langevin_tv}
\mathrm{TV}(p_J,p^\star)
\le \sqrt{\frac{d\alpha L}{4}} +
\exp(-\alpha\lambda J/2)\,\mathrm{TV}(p_0,p^\star),
\end{equation}
where \(\lambda\) is the log-Sobolev constant of the target. In our implementation we use \(\alpha=10^{-3}\), \(J=50\), and warm-start the chain from the previous belief's samples. The approximate reparameterization guarantee follows from treating the Langevin chain as a differentiable function of \(\boldsymbol{\omega}\), with bias controlled by the discretization error in~\eqref{eq:langevin_tv}.

\begin{table}[!t]
\centering
\setlength{\abovecaptionskip}{2pt}
\setlength{\belowcaptionskip}{-3pt}
\caption{Architecture details and hyperparameters.}
\label{tab:hyperparams_full}
\scriptsize
\setlength{\tabcolsep}{2.5pt}
\renewcommand{\arraystretch}{0.78}
\begin{tabular}{@{}p{0.31\columnwidth}L{0.65\columnwidth}@{}}
\toprule
\textbf{Group} & \textbf{Details} \\
\midrule
Belief Nets &
\(d_h=64\), \(d_a=6\), \(d_o=41\), \(d=26\);
transition MLP \([d_h+d_a,128,128,d_h]\);
observation MLP \([d_h+d_o,128,128,d_h]\);
decoder MLP \([d_h,64,K(2d+1)]\);
\(K=8\) for \vnb, \(K=1\) for \gauss/\cem;
SIREN \(3\times128\), \(\omega_0=30\). \\
\midrule
Observation Layout &
(41) pose 6, \(\mathrm{tr}(\boldsymbol{\Sigma}^{\mathrm{pose}})\) 1,
tactile forces 5, contacts 5, joint positions 11,
joint velocities 11, visual scores 2. \\
\midrule
Latent Layout &
(26) pose 6 and \(5\times\) contact parameters \(\times4\). \\
\midrule
Training &
Adam, \(\eta=3\times10^{-4}\), batch size 64;
Gumbel-Softmax \(\tau:1.0\to0.1\) cosine over 500 epochs;
\(\kappa_\rho=5.0\), \(\kappa_f=100.0\);
\(\beta\in\{0.5,0.9,0.95,0.99\}\);
belief samples \(N=256\) online, 512 final evaluation. \\
\midrule
MPC/Planning &
\(T_{\max}=80\), PF particles 100;
CEM population/elite/iterations \(64/0.2/3\);
20 action candidates, close \(\in\{0.05,0.10,0.15,0.20\}\);
\(\lambda_c=1.0\) \vnb, \(0.5\) \gausscvar, \(0\) \gauss;
\(\lambda_v=0.3\). \\
\midrule
Cost Weights &
\(\alpha_s=1.0\), \(\alpha_g=0.5\),
\(\alpha_r=2.0\), \(\alpha_n=0.1\). \\
\midrule
Visual Cost &
\(\omega_{\mathrm{pose}}=\omega_{\mathrm{occ}}=\omega_{\mathrm{seg}}=1.0\);
\(\sigma_{\mathrm{base}}=0.005\,\mathrm m\);
occlusion radius \(0.05\,\mathrm m\). \\
\midrule
Perturbations &
lateral forces \(\{3,5,8,12\}\,\mathrm N\), 4 directions;
torques \(\{0.3,0.6,1.0\}\,\mathrm{Nm}\), 3 axes;
\(\mu\to\{0.05,0.10,0.15\}\);
drop/displacement thresholds \(0.015/0.04\,\mathrm m\);
rotation threshold \(0.3\,\mathrm{rad}\). \\
\midrule
Compute &
NVIDIA RTX 4070, 12\,GB;
\vnb\ episode time \(\approx4\)--6\,s;
\pf\ episode time \(\approx65\)--72\,s. \\
\bottomrule
\end{tabular}
\vspace{-4pt}
\end{table}

\subsection{ICP Scoring and Pose Perturbation Details}\label[appendix]{apx:experiments}
For hardware pose estimation, we compute bounding-box extents for the observed
point-cloud cluster \(\mathcal{P}_{\mathrm{obs}}\) returned by the Orbbec Astra Pro Plus and the object's CAD model. We generate 14 candidate rotations from
cardinal-axis permutations and \(\pi\)-radian rotations, run ICP from each candidate, and score each registration using
\begin{equation}\label{eq:icp_score}
s=s_{\mathrm{fit}}-\lambda_s e_{\mathrm{rmse}},
\vspace{-4pt}
\end{equation}
where \(s_{\mathrm{fit}}\) is the ICP fitness score, \(e_{\mathrm{rmse}}\) is
the position-only root-mean-square error in meters, and \(\lambda_s=0.1\). We select the registration with the highest score and compute the final object
rotation as \(\mathbf{R}_{o}=\mathbf{R}_{\mathrm{icp}}^\top\mathbf{R}_{\mathrm{init}}\),
where \(\mathbf{R}_{\mathrm{init}}\) is the CAD model's default rotation. The final pose estimate stores the world-frame \((\mathcal{F}_{\mathcal{R}})\) position and orientation \(\mathbf{R}_{o}\), and its covariance enters the MPC objective \eqref{eq:mpcobj} through \(C_{\mathrm{vis}}\).

\begin{figure}[t]
\centering
\setlength{\fboxsep}{0pt}
\setlength{\fboxrule}{1pt}
\setlength{\tabcolsep}{1pt}

\begin{minipage}[b]{0.485\textwidth}
\centering
\begin{minipage}[b]{0.52\linewidth}
    \centering
    \adjustbox{angle=-90, width=\linewidth}{%
        \includestandalone{figures/tikz/jig_figure}}
    \par\vspace{1pt}{\small (a) Schematic}
\end{minipage}
\hfill
\begin{minipage}[b]{0.42\linewidth}
    \centering
    \fbox{\includegraphics[
        width=\dimexpr\linewidth-2\fboxrule\relax,
        trim=0pt 0pt 0pt 0pt,
        clip
    ]{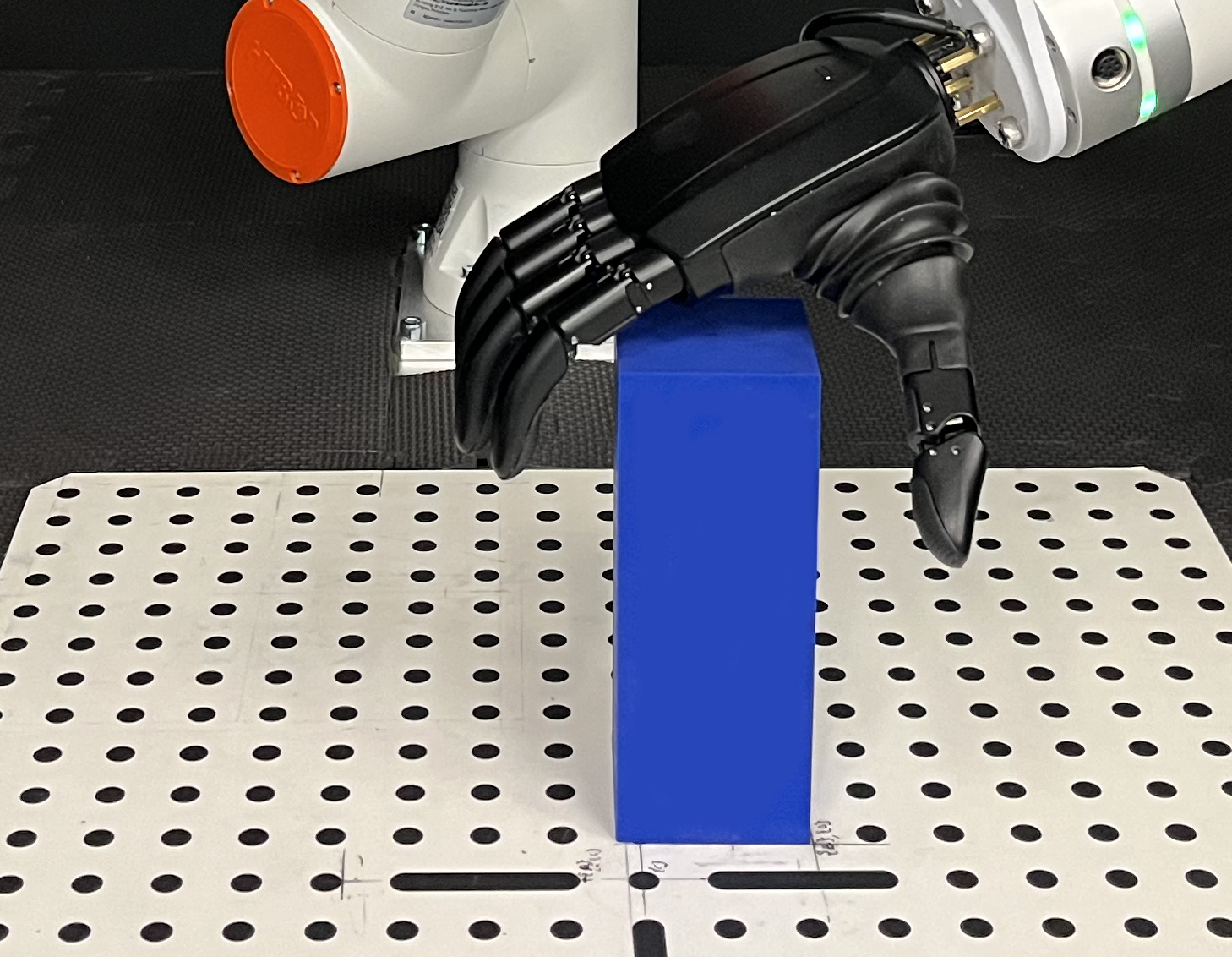}}
    \par\vspace{1pt}{\small (b) Hardware Setup}
\end{minipage}
\vspace{-2pt}
\captionof{figure}{\textbf{Pose Perturbation Protocol.}
A rigid 24\(\times\)19-inch pegboard defines a discrete planar coordinate frame
\(\{\mathcal{F}_{\mathcal{B}}\}\), with origin \(\mathbf{p}_{\mathcal{B}}\)
axis-aligned to the robot base frame \(\{\mathcal{F}_{\mathcal{R}}\}\).
We place objects at fixed dot offsets from \(\mathbf{p}_{\mathcal{B}}\), and apply
controlled pose perturbations using the offset set in~\eqref{eq:hw_offs}.}
\label{fig:jig}
\end{minipage}
\\[1em]
\begin{minipage}[b]{0.465\textwidth}
\centering
\includegraphics[width=\linewidth]{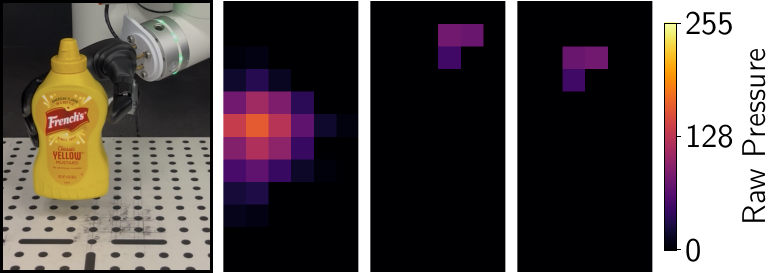}
\begin{tabular}{@{}ccccc@{}}
\par\vspace{1pt}
\makebox[0.27\linewidth]{\small (a)} &
\makebox[0.17\linewidth]{\small (b)} &
\makebox[0.17\linewidth]{\small (c)} &
\makebox[0.17\linewidth]{\small (d)} &
\makebox[0.14\linewidth]{}
\end{tabular}
\vspace{-2pt}
\captionof{figure}{\textbf{Hardware Grasp and Tactile Readings.} Sample hardware grasp~(a) and corresponding \(12\times 6\) taxel arrays of analog pressure values in \([0, 255]\) for the active thumb~(b), index~(c), and middle finger~(d).}
\label{fig:tactdesc}
\end{minipage}
\vspace{-6pt}
\end{figure}
The pegboard is oriented with its longer edge, 24 dots, aligned with the
\(y_{\mathcal{R}}\) axis and its shorter edge, 19 dots, aligned with the
\(x_{\mathcal{R}}\) axis. The pegboard center \(\mathbf{p}_{\mathcal{B}}\) is
axis-aligned with the robot base frame. We assign each object an initial lattice
location relative, in lattice counts, to the lattice at \(\mathbf{p}_{\mathcal{B}}\):
box and soup can \((-2,+1)\), and mustard bottle \((-1,+1)\). The four
evaluated offsets are the set \(\mathcal{O}\) in~\eqref{eq:hw_offs}.
For each trial, we compute the realized \(SE(3)\) perturbation \(\Delta X\)
from the ICP estimate, then use the RealSense stream to monitor slip during the
4\,cm lift and 2\,s hold. A trial is successful if object displacement remains
below 2\,cm.

\subsection{Hardware Grasp Quality Proxy}\label[appendix]{apx:ehatcomp}
The robotic hand used in our hardware experiments provides a \(12 \times 6\)
pressure array of taxels, tactile pixels, at each fingertip; see
\Cref{fig:tactdesc}. Each taxel reports pressure in \([0,255]\), and we sum over
the tactile-pad area to obtain a normal-force proxy
\begin{equation}\label{eq:tactforce}
    \hat{f}_{i} = \frac{1}{k_{i}} \sum_{u=1}^{6}\sum_{v=1}^{12} p_{i,u,v},
\end{equation}
where \(p_{i,u,v} \in [0,255]\) is taxel \((u,v)\) at finger \(i\), and
\(k_{i}\) is a finger-specific pressure-to-force calibration constant
\((k_{\text{thumb}}=25.5,\; k_{j}=204.0\) for \(j \neq \text{thumb})\). A
finger is in contact when \(\hat{f}_{i}>f_{\min}=0.05\;\text{N}\). However, because the tactile array lacks the full 6D contact wrench, our hardware
pipeline cannot directly certify force closure. We therefore use a
force-closure-inspired tactile grasp-quality proxy computed from active contact
geometry. Let \(\widehat{\mathcal{C}}_t\) denote the hardware-estimated active
contact set at time \(t\), obtained from tactile activation. This is the
hardware analogue of the contact set used to construct the grasp wrench space
in the model-based planner. For each active finger
\(i \in \widehat{\mathcal{C}}_t\), we estimate the fingertip world position
\(\mathbf{p}_{i,t}\) via forward kinematics and construct the unit contact normal
\begin{equation}\label{eq:contact-normal}
    \hat{\mathbf{n}}_{i,t}
    =
    \frac{\mathbf{c}_t - \mathbf{p}_{i,t}}
         {\|\mathbf{c}_t - \mathbf{p}_{i,t}\|},
\end{equation}
where \(\mathbf{c}_t \in \mathbb{R}^{3}\) is the FK-inferred object center. We
then complete the right-handed contact frame
\(\mathcal{F}^{\mathcal{C}}_{i,t}
            =\{\hat{\mathbf n}_{i,t},
            \hat{\mathbf t}_{i,t}^{(1)},
            \hat{\mathbf t}_{i,t}^{(2)}\}\)
by choosing
\(\hat{\mathbf{t}}_{i,t}^{(1)} = \hat{\mathbf{n}}_{i,t} \times \mathbf{e}\), with
\(\mathbf{e}\) a non-parallel reference direction, and
\(\hat{\mathbf{t}}_{i,t}^{(2)} = \hat{\mathbf{n}}_{i,t} \times \hat{\mathbf{t}}_{i,t}^{(1)}\).
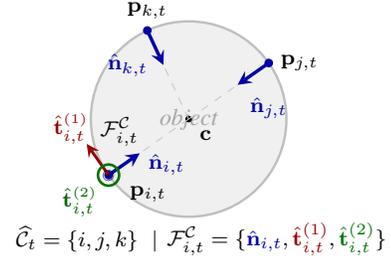
\begin{figure}[!t]
        \centering
        \setlength{\abovecaptionskip}{1pt}
        \setlength{\belowcaptionskip}{-4pt}
        \begin{tikzpicture}[
            scale=0.8,
            >=stealth,
            nvec/.style={->, very thick, blue!65!black, line width=1.3pt},
            tvec/.style={->, thick,      red!60!black,  line width=1.1pt},
            oop/.style={circle, draw=green!45!black, fill=none,
                        line width=1pt, minimum size=8pt, inner sep=0pt},
            lbl/.style={font=\footnotesize},
        ]
        \def\R{2.0}
        \def\nlen{0.78}
        \def\tlen{0.78}

        \draw[fill=gray!12, draw=gray!50, line width=0.9pt] (0,0) circle (\R);
        \node[lbl, gray!70, font=\small\itshape] at (0,0) {object};

        \filldraw[black] (0,0) circle (1.6pt);
        \node[lbl, below right=1pt] at (0,0) {\(\mathbf{c}_t\)};

        \pgfmathsetmacro{\ATH}{215}
        \pgfmathsetmacro{\QTHx}{\R*cos(\ATH)}
        \pgfmathsetmacro{\QTHy}{\R*sin(\ATH)}
        \pgfmathsetmacro{\NTHx}{-cos(\ATH)}
        \pgfmathsetmacro{\NTHy}{-sin(\ATH)}
        \pgfmathsetmacro{\TTHx}{-\NTHy}
        \pgfmathsetmacro{\TTHy}{\NTHx}
        \pgfmathsetmacro{\NTHex}{\QTHx + \nlen*\NTHx}
        \pgfmathsetmacro{\NTHey}{\QTHy + \nlen*\NTHy}
        \pgfmathsetmacro{\TTHex}{\QTHx + \tlen*\TTHx}
        \pgfmathsetmacro{\TTHey}{\QTHy + \tlen*\TTHy}
        \pgfmathsetmacro{\OOPx}{\QTHx + 0.42*\TTHx + 0.30*\NTHx}
        \pgfmathsetmacro{\OOPy}{\QTHy + 0.42*\TTHy + 0.30*\NTHy}

        \draw[dashed, gray!45, very thin] (0,0) -- (\QTHx,\QTHy);
        \filldraw[blue!65!black] (\QTHx,\QTHy) circle (2.2pt);
        \node[lbl, below right =0.8pt, xshift=4pt] at (\QTHx,\QTHy) {\(\mathbf{p}_{i,t}\)};
        \node[lbl, above=-2pt, xshift=4pt] at (\OOPx,\OOPy) {\(\mathcal{F}^{\mathcal{C}}_{i,t}\)};
        \draw[nvec] (\QTHx,\QTHy) -- (\NTHex,\NTHey)
        node[lbl, below right=-1pt, yshift=3pt, at end] {\(\hat{\mathbf{n}}_{i,t}\)};

        \draw[tvec] (\QTHx,\QTHy) -- (\TTHex,\TTHey)
        node[lbl, above=0pt, xshift=-6pt, yshift=-4pt, at end]
        {\(\hat{\mathbf{t}}_{i,t}^{\scriptscriptstyle(1)}\)};

        \node[oop, font=\tiny] at (\QTHx,\QTHy) {\(\odot\)};

        \node[lbl, green!40!black, below left=1pt, fill=none] at (\QTHx,\QTHy)
            {\(\hat{\mathbf{t}}_{i,t}^{\scriptscriptstyle(2)}\)};

        \pgfmathsetmacro{\AIN}{35}
        \pgfmathsetmacro{\QINx}{\R*cos(\AIN)}
        \pgfmathsetmacro{\QINy}{\R*sin(\AIN)}
        \pgfmathsetmacro{\NINx}{-cos(\AIN)}
        \pgfmathsetmacro{\NINy}{-sin(\AIN)}
        \pgfmathsetmacro{\NINex}{\QINx + \nlen*\NINx}
        \pgfmathsetmacro{\NINey}{\QINy + \nlen*\NINy}

        \draw[dashed, gray!45, very thin] (0,0) -- (\QINx,\QINy);
        \filldraw[blue!65!black] (\QINx,\QINy) circle (2.2pt);
        \node[lbl, right=1pt] at (\QINx,\QINy) {\(\mathbf{p}_{j,t}\)};
        \draw[nvec] (\QINx,\QINy) -- (\NINex,\NINey)
            node[lbl, below right=1pt, at end] {\(\hat{\mathbf{n}}_{j,t}\)};

        \pgfmathsetmacro{\AMD}{115}
        \pgfmathsetmacro{\QMDx}{\R*cos(\AMD)}
        \pgfmathsetmacro{\QMDy}{\R*sin(\AMD)}
        \pgfmathsetmacro{\NMDx}{-cos(\AMD)}
        \pgfmathsetmacro{\NMDy}{-sin(\AMD)}
        \pgfmathsetmacro{\NMDex}{\QMDx + \nlen*\NMDx}
        \pgfmathsetmacro{\NMDey}{\QMDy + \nlen*\NMDy}

        \draw[dashed, gray!45, very thin] (0,0) -- (\QMDx,\QMDy);
        \filldraw[blue!65!black] (\QMDx,\QMDy) circle (2.2pt);
        \node[lbl, above=1pt] at (\QMDx,\QMDy) {\(\mathbf{p}_{k,t}\)};
        \draw[nvec] (\QMDx,\QMDy) -- (\NMDex,\NMDey)
            node[lbl, left=2pt, at end] {\(\hat{\mathbf{n}}_{k,t}\)};

        \node[lbl] at (-2,-2.45)
            {\(\widehat{\mathcal{C}}_t=\{i,j,k\} ~ \mid ~\)};

       \node[lbl, align=center] at (1.8,-2.45)
        {\(\mathcal{F}^{\mathcal{C}}_{i,t}
        =\{\textcolor{blue!65!black}{\hat{\mathbf n}_{i,t}},
        \textcolor{red!60!black}{\hat{\mathbf t}_{i,t}^{(1)}},
        \textcolor{green!45!black}{\hat{\mathbf t}_{i,t}^{(2)}}\}\)};
        \end{tikzpicture}
    \vspace{-3pt}
    \caption{\textbf{Contact-Frame Assignment.} For each active finger \(i\in\widehat{\mathcal C}_t\), we compute each fingertip's position, \(\mathbf{p}_{i,t}\), via forward kinematics. Dashed lines connect the inferred object center \(\mathbf{c}_t\) to active fingertip positions. Blue arrows denote inward contact normals \(\hat{\mathbf n}_{i,t}\), the red arrow denotes \(\smash{\hat{\mathbf t}_{i,t}^{(1)}}\), and \(\odot\) denotes \(\smash{\hat{\mathbf t}_{i,t}^{(2)}}\), pointing out of the page.}
    \label{fig:ctcframe}
    \vspace{-1pt}
\end{figure}
With \(\mathcal{F}^{\mathcal{C}}_{i,t}\), we then compute a force-closure-inspired tactile proxy as
\begin{equation}\label{eq:ehat}
    \hat{\varepsilon}_t
    \;=\;
    \mathcal{Q}\!\Bigl(
    \bigl\{(\mathbf{p}_{i,t},\,\hat{\mathbf{n}}_{i,t},\,\mu)\bigr\}_{i \in \widehat{\mathcal{C}}_t}
    \Bigr),
\end{equation}
where \(\mathcal{Q}(\cdot)\) is the Ferrari--Canny quality
functional~\cite{ferrari1992planning} evaluated on the approximate grasp wrench
space induced by inferred contacts, linearized friction cones, fixed
\(\mu=0.5\), and unit normal forces. Since the contacts, normals, and friction
are estimated, \(\hat{\varepsilon}_t\) is a force-closure-inspired tactile proxy
rather than a direct certificate. We set \(\hat{\varepsilon}_t=0\) when
\(|\widehat{\mathcal{C}}_t|<2\), since the approximate grasp wrench space is
then not spanned.

\end{document}